\documentclass{article}

\usepackage{arxiv}

\usepackage[utf8]{inputenc} 
\usepackage[T1]{fontenc}    
\usepackage{hyperref}       
\usepackage{url}            
\usepackage{booktabs}       
\usepackage{amsfonts}       
\usepackage{nicefrac}       
\usepackage{microtype}      
\usepackage{lipsum}

\usepackage{makecell}
\usepackage{multirow}
\usepackage{amsmath}
\usepackage{amsthm}
\usepackage{graphicx}
\usepackage{subcaption}
\usepackage{cite}
\usepackage{authblk}

\newtheorem{lemma}{Lemma}
\newtheorem{theorem}{Theorem}
\newtheorem{protocol}{Protocol}
\newtheorem{corollary}{Corollary}
\providecommand{\keywords}[1]{\textbf{\textit{Keywords:}} #1}

\graphicspath{{figures/}} 
\title{Anomaly detection with superexperts under delayed feedback}

\author[1]{Raisa Dzhamtyrova}
\author[1, 2] {Carsten Maple }

\affil[1]{\footnotesize The Alan Turing Institute}
\affil[2]{\footnotesize WMG Cyber Security Centre, University of Warwick}
\affil[ ] {\textit \url{rdzhamtyrova@turing.ac.uk} \hspace*{0.3cm} \url{cm@warwick.ac.uk}}
\begin{document}
\maketitle

\begin{abstract}
	The increasing connectivity of data and cyber-physical systems has resulted in a growing number of cyber-attacks. Real-time detection of such attacks, through the identification of anomalous activity, is required so that mitigation and contingent actions can be effectively and rapidly deployed. We propose a new approach for aggregating unsupervised anomaly detection algorithms and incorporating feedback when it becomes available. We apply this approach to open-source real datasets and show that both aggregating models, which we call experts, and incorporating feedback significantly improve the performance. An important property of the proposed approaches is their theoretical guarantees that they perform close to the best superexpert, which can switch between the best performing experts, in terms of the cumulative average losses.
\end{abstract}

\keywords{anomaly detection \and online learning \and prediction with expert advice}

\section{Introduction}
Anomaly detection is an important and well-studied problem that has applications in various domains. Anomalies are defined to be patterns in data that differ from the expected behaviour \cite{chandola2009survey}. Early detection of data breaches, fraudulent activities, cyber-attacks, and system failures can potentially prevent adverse events and financial and reputational losses. The optimal choice of algorithm depends on the type of problem and the nature of the anomalies.~Anomaly detection techniques can be divided into three groups: supervised, in which anomalies are labelled; semi-supervised, in which only the normal class is available; and unsupervised, in which there are no labels available. The nature of anomalies can also be classified into three groups: point anomalies, where a single point can be anomalous; contextual anomalies, where a data instance is anomalous in a specific context; and collective anomalies, where a collection of related data instances is anomalous concerning the entire dataset \cite{chandola2009survey}. The paper \cite{chandola2009survey} provides an extensive survey of anomaly detection techniques for intrusion detection, fraud detection, fault detection, and other domains. The survey of novelty detection methods that aim to identify test data, which is in some respect is different from the available training data can be found in \cite{chandola2009comparative}. The article \cite{pimentel2014novelty} gives a comparative study and evaluation of a large number of anomaly detection techniques on artificial and publicly available datasets. Furthermore, recent papers \cite{chalapathy2019survey, habeeb2019survey} provide surveys of deep learning methods and real-time big data processing techniques for anomaly detection.

In this paper, we propose a new approach to aggregating unsupervised anomaly detection algorithms, which takes into account feedback, when it becomes available. It means, that we need to make several predictions before we see the outcomes which are revealed later altogether. Though most of the anomaly detection algorithms are unsupervised because feedback is often not available, it is realistic to assume that we will know which observations were anomalous after some time. For example, if there was a system failure, or the analysts monitor the system weekly/monthly and manually label the anomalous points. The anomalous class is usually a small percent of total observations, and therefore one needs only provide timestamps of anomaly events instead of providing labels for all data points. 

The proposed approach is based on the Prediction with Expert Advice (PEA) framework.  In PEA a {\em learner} competes with {\em experts}, which may be human experts, other predictive models or even classes of functions.
The application of PEA to solve real-world problems includes domains such as sport \cite{vovk_fedor_brier}, renewable energy forecasting \cite{dzhamtyrova2020quantile}, and finance \cite{CovOrd96, dzhamtyrova2020var}. The theory goes back to
\cite{maj_little_1994, cbl_book_prediction}. The investment methods competitive with large parametric classes are introduced in the universal portfolios theory in \cite{CovOrd96}, which compete against portfolio selection techniques. In the PEA framework, the goal of the learner is to construct a {\em prediction strategy} which can predict almost as well as the best expert in terms of cumulative losses. This approach works in online mode, where one does not have a training dataset to find the best model in advance. It should be noted that the performance of models changes with time. As a result, reliance on one model can be dangerous in practice since the best model can degrade with time and produce unreliable predictions. Instead of re-training a model periodically, we propose a different approach that mixes predictions of several models based upon their current performance.

The Aggregating Algorithm (AA) was introduced in \cite{vovk_aggr} to solve the problem of competitive prediction. The method assigns initial weights to experts and at each step updates their weights based on experts' current losses. The approach is a generalisation of the Bayesian mixture of probabilistic models and coincides with the Bayesian mixture in the case of the logarithmic loss function \cite{vovk_cols}. However, the AA works with a wide range of functions \cite{vovk_cols}, such as square-loss, which we consider in this paper alongside logarithmic loss. The AA provides a theoretical guarantee that in the case of a finite number of experts a learner who follows the strategy of the AA has the loss as small as the best expert's loss plus a constant at any time step in the future. The generalisation of the AA for the case of predicting of {\em packs}, vector-valued outcomes, is proposed in \cite{Adamskiy2019Packs}. The authors show that the AA can be applied to vector-valued outcomes and its theoretical guarantees are similar to those for the scalar outcomes.

In this paper, the term {\em superexpert} \cite{kalnishkan2018ml_lectures} will refer to a strategy that can `switch' between experts. Commonly, some models can provide good predictions in particular areas of their expertise. For example, in a time-series forecasting problem some models are good at detecting seasonality, and others at finding trends. In the case of anomaly detection, some models can be good at predicting system failures whereas others are good at forecasting security breaches. In practice, it is important to identify different types of anomalies. However, most models would perform well only at some particular tasks. Furthermore, some models provide better predictions at different time segments of data. We assume that the superexpert can predict as a particular expert at some time interval and then switch to another expert. Our goal is to find a strategy that can perform close to these superexperts. The AA performs as well as the best expert in terms of cumulative losses. The Fixed-share and Variable-share algorithms proposed in \cite{warmuth_tracking_original} are generalisations of the AA and can asymptotically perform close to the best superexpert. 

In this paper, we propose an adaptation of the Fixed-share and Variable-share algorithms that can work with the delayed feedback. These approaches are also an extension of the theory of Aggregating Algorithm for Predictions of Packs \cite{Adamskiy2019Packs}. A similar setting of the online learning under delayed feedback is investigated in \cite{korotin2020adaptive}, where an adaptation of the Hedge algorithm \cite{freund1997hedge} under delayed feedback is proposed. In \cite{vyugin2018tracking} an adaptation of the Hedge algorithm for tracking the best combination of experts is investigated. The delayed setting also has been studied within the online convex optimisation framework \cite{joulani2013online, quanrud2015online}. We prove the worst-case upper bounds on the performance of the proposed approaches that are similar to those in \cite{warmuth_tracking_original}, however, they are of a different kind: the bounds provide guarantees on the cumulative average losses instead of the cumulative losses. 
In the case when the number of switches is $k$, the total number of steps is $T$, and the number of experts is $N$, the additional average loss of the Fixed-share algorithm is bounded by $O(k \log N + k \log (T /k))$. When the loss per trial can be bounded, Variable-share produces an additional average loss bounded by $O(k \log N + k \log(\hat{L} / k))$, where $\hat{L}$ is the total average loss of the best superexpert with $k$ switches and $N$ base experts. In comparison to Fixed-share, the bound of Variable-share does not contain the number of steps $T$. The bounds of these algorithms are worse than the AA, which is unsurprising since they compete with all superexperts, and the number of superexperts grows exponentially with the number of switches. The algorithms and the theoretical bounds are generic and can be applied to a wide range of tasks and a wide range of loss functions.

We then apply these approaches to the problem of anomaly detection. It provides a new method for aggregating unsupervised anomaly detection algorithms that take into account true labels when they become known. To the best of our knowledge, it is the first approach for anomaly detection, which aggregates different algorithms and automatically takes into account delayed feedback. In the case of anomaly detection, it is also relatively low cost to collect the true labels as the anomalies' percent is usually low. In addition, the methods are highly explainable as they are based on the weights update that reflects a real-time assessment of the algorithm's performance. We test these approaches on the Numenta Anomaly Benchmark (NAB) which contains both artificial and real datasets with labelled anomalies~\footnote{\label{nab} \url{https://github.com/numenta/NAB}}, which provides an environment for testing anomaly detection algorithms on streaming data. The data contains a mix of contextual and collective anomalies, clean and noisy data, artificial and real scenarios, and data streams where the statistics evolve over time. The framework currently contains 15 unsupervised anomaly detection algorithms. The experimental results show a significant performance improvement compared to any single model in terms of various classification metrics and loss functions. An extension of the theory of Aggregating Algorithm for Prediction of Packs to Fixed-share and Variable-share also provide a significant performance improvement. Though the proposed methods have theoretical guarantees on the cumulative average losses, the experiments show that they also work well for the cumulative losses. We provide an open-source code of our implementations to enable reproduction of our results~\footnote{\label{anomaly} \url{https://github.com/alan-turing-institute/anomaly_with_experts}}.

\section{Contributions}
Our first contribution is an adaptation of the Fixed-share and Variable-share algorithms to delayed feedback, which extends the theory of Aggregating Algorithm for Predictions of Packs. 
We prove the theoretical guarantees of these approaches, which show that they can predict asymptotically as well as any superexpert in terms of the cumulative average losses. The algorithms and the theoretical bounds are generic and can be applied to a wide range of tasks, such as classification and regression, and a wide range of loss functions.

The second contribution is a new approach for aggregating unsupervised anomaly detection algorithms and incorporating feedback, which comes with a delay. To the best of our knowledge, it is the first approach for anomaly detection, which aggregates different algorithms and automatically takes into account delayed feedback. In the case of anomaly detection, it is also easy to incorporate feedback as the percent of the anomalies is usually small. Therefore, one can retrospectively indicate only the timestamps with anomalies, without the need to provide all the data points. We formulate the problem and introduce the game and loss functions. Apart from the theoretical guarantees, another advantage of the approach is that it is highly explainable: the weights' update reflects the current algorithms' performances.

The third contribution is an application of our approach on an open-source framework with a large variety of both real and artificial time-series. The results show that both aggregating models and incorporating feedback can bring a significant improvement to the performance of unsupervised anomaly detection algorithms in terms of various classification metrics and loss functions. The approach also provides a significant improvement compared to the Aggregating Algorithm for Pack Averages (AAP-current). Though the proposed approaches have theoretical guarantees on the cumulative average losses, the experimental results show that they also perform well in terms of cumulative losses.

\section{Framework} \label{sec:framework}
Suppose that we have access to the predictions of some pool of experts that predict outcomes from some outcome space $\Omega$. The experts' predictions come from some decision space $\Gamma$. In the PEA framework a learner's goal is to construct a {\em prediction strategy} whose performance will be close to the performance of the best expert from the pool in terms of a specified loss function $\lambda: 
\Omega \times \Gamma \rightarrow \mathbb{R}$. A triple $\mathcal{G} = \langle \Omega, \Gamma, \lambda \rangle$ is called a {\em game}. At each step $t$ the learner outputs its predictions after seeing the predictions of experts. In the classic online learning protocol, the true outcome comes immediately after making the prediction. In this paper, we modify the classical protocol for the delayed feedback, when the outcomes may come after an arbitrary number of steps after making the prediction. The following is the protocol of prediction with expert advice under the delayed feedback, with the notations explained in the following paragraph.
\begin{protocol}[Prediction with expert advice under delayed feedback]~
	\label{protocol_delayed} 
	\begin{tabbing} 
		$L^{\text{average}}_0 := 0$ \\
		$L^{\text{average}}_0(\mathcal{E}_i) := 0$ \\
		\quad\=\quad\=\quad\=\quad\=\quad\kill
		FOR $t=1,2,\ldots$\\
		\>\> FOR $d=1,2,\ldots D_t$\\
		\>\>\> Experts $\mathcal{E}_i$ output $\xi_{t,d}(i) \in \Gamma,~i =1, 2, \dots, N$\\
		\>\>\> Learner outputs $\gamma_{t,d} \in \Gamma$\\
		\>\> END FOR \\
		\>\> Nature announces $y_{t, d} \in \Omega $,~$d=1,\ldots D_t$\\
		\>\> $L^{\text{average}}_t := L^{\text{average}}_{t-1} + \frac{1}{D_t}\sum_{d=1}^{D_t}\lambda(y_{t,d}, \gamma_{t,d})$ \\
		\>\> $L^{\text{average}}_t(\mathcal{E}_i) := L^{\text{average}}_{t-1}(\mathcal{E}_i) + \frac{1}{D_t}\sum_{d=1}^{D_t} \lambda(y_{t,d}, \xi_{t,d}(i)),~ i =1, 2, \dots, N$ \\
		END FOR
	\end{tabbing}
\end{protocol}

At each step $t$ and each delay $d$ expert $\mathcal{E}_i$ outputs its prediction $\xi_{t,d}(i) \in \Gamma, ~i = 1, 2, \dots, N$, where $N$ is the number of experts. After seeing all experts' predictions, the learner outputs prediction $\gamma_{t, d} \in \Gamma$. After the learner made $D_t$ predictions, nature announces true outcomes $y_{t, d} \in \Omega $, $d=1,2,\dots, D_t$, the experts and the learner suffer losses $\lambda(y_{t,d}, \xi_{t,d}(i))$, $d=1,2,\dots, D_t$ and $\lambda(y_{t,d}, \gamma_{t,d})$, $d=1,2,\dots, D_t$ respectively.
In the classical protocol, the aim of the learner is to keep its cumulative loss small compared to the cumulative losses of all the experts at each step. In this paper, we 
construct algorithms that can compete in terms of the {\em cumulative average losses}, where the cumulative average losses of the learner and expert $\mathcal{E}_i$ are defined as:
\begin{equation} \label{eq:cum_avg_learner}
	L^{\text{average}}_T := \sum_{t=1}^T \frac{1}{D_t}\sum_{d=1}^{D_t}\lambda(y_{t,d}, \gamma_{t,d}),
\end{equation}	
\begin{equation} \label{eq:cum_avg_expert}
	L^{\text{average}}_T(\mathcal{E}_i) := \sum_{t=1}^T\frac{1}{D_t}\sum_{d=1}^{D_t} \lambda(y_{t,d}, \xi_{t,d}(i)),~ i =1, 2, \dots, N.
\end{equation}	
We define a {\em regret} or an {\em additional average loss} to be the difference between the cumulative average losses of the best expert and the learner $R_t = L^{\text{average}}_t - \min_{i = 1, \dots, N} L^{\text{average}}_t(\mathcal{E}_i) $. 

We construct our algorithms following the theory of Aggregating Algorithm for prediction of {\em packs}, vector-valued outcomes \cite{Adamskiy2019Packs}. In this theory, three algorithms were proposed: Aggregating Algorithm for Packs with the Known Maximum (AAP-max), Aggregating Algorithm for Packs with the Unknown Maximum (AAP-incremental), and Aggregating Algorithm for Pack Averages (AAP-current). In this paper, we modify AAP-current to follow predictions of superexperts and prove the theoretical bounds on its performance. We also consider a different protocol, where predictions and outcomes come from one-dimensional spaces $\Gamma$ and $\Omega$. AAP-current works with cumulative average losses, whereas AAP-max and AAP-incremental work with cumulative losses. However, AAP-max and AAP-incremental have a drawback that they require the knowledge of the maximum delay and the next step's delay size respectively, which makes them not applicable in real scenarios. Though AAP-current works with cumulative average losses, it does not need to know the size of the delay in advance. In the experimental part, we show that the algorithm also works well in terms the cumulative losses.

In this paper, we consider anomalies to be binary outcomes from the outcome space $\Omega = \{0, 1\}$, and predictions of experts and of the learner are anomaly probabilities from the prediction space $\Gamma = [0, 1]$. We consider two loss functions: {\em logarithmic loss} (log-loss):
\begin{equation*}
	\lambda(y, \gamma) = \begin{cases} 
		-\ln \gamma \hspace{1.865cm}\text{if $y = 1$,} \\
		-\ln (1-\gamma) \hspace{1.07cm}\text{if $y = 0$,}
	\end{cases}
\end{equation*}
and {\em square-loss}:
\begin{equation*}
	\lambda(y, \gamma) = (y - \gamma)^2.
\end{equation*}
We refer to the games with the respective loss functions as the {\em log-loss} and {\em square-loss games}.

\section{Aggregating Algorithm for Pack Averages} \label{sec:AA}
In this section, we describe AAP-current. To solve the problem of predicting as well as the best expert in the pool the Aggregating Algorithm (AA) was proposed by V.~Vovk in \cite{vovk_aggr}. The idea of the AA is to update experts' weights at each step according to their current losses. AAP-current, its modification for the case of the prediction of vector-valued outcomes, was proposed in \cite{Adamskiy2019Packs}. At step $t$ when nature reveals the outcomes $y_{t,d},~d=1,\dots, D_t$, AAP-current updates the experts' weights as follows:
\begin{equation}\label{eq:weights_update_aa}
	w_t(i) = e^{-\frac{\eta}{D_t} \sum_{d=1}^{D_t} \lambda(y_{t,d}, \xi_{t,d}(i))} w_{t-1}(i),
\end{equation}
where $\eta$ is the {\em learning rate} and $w_0(i)$ is the experts' prior distribution.

At each step $t$ and delay $d$, AAP-current outputs the prediction $\gamma_{t,d}$, which coincides with the predictions of the AA:
\begin{equation}\label{eq:gammaAA}
	\forall y \in \Omega: \lambda(y, \gamma_{t,d}) \le C g_{t,d}(y),
\end{equation}
where  $C>0$ is a positive constant and $g_{t,d}(y)$ is the {\em generalised prediction}:
\begin{equation}\label{eq:generalised}
	g_{t,d}(y) = -\frac{1}{\eta} \ln \sum_{i=1}^N e^{-\eta \lambda(y, \xi_{t,d}(i))} w_{t-1}^{*}(i),
\end{equation}
where $w_{t-1}^{*}(i)$ are normalized weights:
\begin{equation}\label{eq:weights_norm}
	w_{t-1}^{*}(i) = \frac{w_{t-1}(i)}{\sum_{i=1}^N w_{t-1}(i)}.
\end{equation}

A constant
$C>0$ is called {\em admissible} for a learning rate $\eta>0$ if there exists $\gamma_{t,d}$ which satisfies (\ref{eq:gammaAA}) for
every set of predictions
$\xi_{t,d}(i) \in \Gamma$, and every distribution
$w_{t-1}^*(i)$ (such that $\sum_{i=1}^N w_{t-1}^*(i) = 1$), and all outcomes $y\in\Omega$. The infimum of all admissible $C$ is called the {\em mixability constant}. The important class of games where the mixability constant $C=1$ is called {\em mixable}. 

Both the log-loss and the square-loss games which we consider in this paper are mixable. The AA's predictions $\gamma_{t,d}$  for the log-loss game which satisfy (\ref{eq:gammaAA}) are as follows (Section 2.2 in \cite{vovk_cols}):
\begin{equation} \label{eq:gamma_log}
	\gamma_{t,d} = \sum_{i=1}^N \xi_{t,d}(i) w^*_{t-1}(i),
\end{equation}
and for the square-loss game (Section 2.4 in \cite{vovk_cols}):
\begin{equation} \label{eq:gamma_square}
	\gamma_{t,d} = \frac{1}{2} - \frac{g_{t,d}(1) - g_{t,d}(0)}{2}.
\end{equation}

An important property of AAP-current is its theoretical guarantee. The following lemma provides the theoretical bound on the cumulative loss of AAP-current.
\begin{lemma} [Theorem 3 in \cite{Adamskiy2019Packs}] \label{lemma:AAP_bound}
	If $C$ is admissible for $\mathcal{G}$ with the learning rate $\eta$, then the learner following AAP-current suffers loss satisfying 
	\begin{equation}  \label{eq:AAP_bound}
		L^{\text{average}}_T \le C L^{\text{average}}_T(\mathcal{E}_i) + \frac{C}{\eta} \ln \frac{1}{w_0(i)}.
	\end{equation} 
\end{lemma}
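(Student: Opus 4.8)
The plan is to run the standard Aggregating-Algorithm potential argument, tracking the total weight $W_t := \sum_{i=1}^N w_t(i)$ and sandwiching $\ln W_T$ between a quantity controlled by the learner's cumulative average loss and a quantity controlled by the loss of a single fixed expert $\mathcal{E}_i$. Since $w_0$ is a prior distribution we have $W_0 = 1$, so the additive term will come out as $\frac{C}{\eta}\ln\frac{1}{w_0(i)}$.

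First I would establish the easy lower bound on $W_T$. Telescoping the weight update (\ref{eq:weights_update_aa}) over $t=1,\dots,T$ gives $w_T(i) = w_0(i)\,e^{-\eta L^{\text{average}}_T(\mathcal{E}_i)}$ directly from the definition (\ref{eq:cum_avg_expert}), and since the weights are nonnegative, $W_T \ge w_T(i)$, hence $\ln W_T \ge \ln w_0(i) - \eta L^{\text{average}}_T(\mathcal{E}_i)$.

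Next comes the upper bound, established step by step. Fix a step $t$ and abbreviate $a_{i,d} := e^{-\eta\lambda(y_{t,d},\xi_{t,d}(i))}$ and $p_i := w_{t-1}^*(i)$. By the definition (\ref{eq:generalised}) of the generalised prediction, $\sum_i a_{i,d}\,p_i = e^{-\eta g_{t,d}(y_{t,d})}$, whereas the \emph{average}-loss update (\ref{eq:weights_update_aa}) together with normalisation (\ref{eq:weights_norm}) yields $\frac{W_t}{W_{t-1}} = \sum_i \big(\prod_{d=1}^{D_t} a_{i,d}\big)^{1/D_t} p_i$. The main point is to relate these two expressions, and this is exactly a generalised H\"older inequality: taking the probability measure $p$ on $\{1,\dots,N\}$, the $D_t$ functions $i\mapsto a_{i,d}^{1/D_t}$, and conjugate exponents all equal to $D_t$ (so their reciprocals sum to one), I get
\[
\sum_{i}\Big(\prod_{d=1}^{D_t} a_{i,d}\Big)^{1/D_t} p_i \;\le\; \prod_{d=1}^{D_t}\Big(\sum_i a_{i,d}\,p_i\Big)^{1/D_t} \;=\; e^{-\frac{\eta}{D_t}\sum_{d=1}^{D_t} g_{t,d}(y_{t,d})}.
\]
Taking logarithms gives $\frac{1}{D_t}\sum_d g_{t,d}(y_{t,d}) \le -\frac{1}{\eta}\ln\frac{W_t}{W_{t-1}}$. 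Combining this with admissibility (\ref{eq:gammaAA}), i.e. $\lambda(y_{t,d},\gamma_{t,d}) \le C\,g_{t,d}(y_{t,d})$, averaged over $d$ and multiplied by $C>0$, produces the per-step bound $\frac{1}{D_t}\sum_d \lambda(y_{t,d},\gamma_{t,d}) \le -\frac{C}{\eta}\ln\frac{W_t}{W_{t-1}}$. Summing over $t$ telescopes to $L^{\text{average}}_T \le \frac{C}{\eta}(\ln W_0 - \ln W_T)$.

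Finally I would substitute the lower bound on $\ln W_T$ and use $W_0=1$ to obtain $L^{\text{average}}_T \le C\,L^{\text{average}}_T(\mathcal{E}_i) + \frac{C}{\eta}\ln\frac{1}{w_0(i)}$, as claimed. I expect the genuinely non-routine step to be the passage from the geometric-mean form of $W_t/W_{t-1}$ (forced by the average-loss weight update) to the product of per-trial mixtures $\sum_i a_{i,d}\,p_i$; everything else is the familiar potential/telescoping bookkeeping. It is precisely the correct direction of the H\"older inequality that makes the $1/D_t$ averaging harmless, so it is worth double-checking that the inequality points the right way so that the learner's loss ends up bounded \emph{above}.
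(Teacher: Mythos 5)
Your proof is correct and follows essentially the same route the paper relies on: the potential argument on $W_t=\sum_i w_t(i)$ combined with the generalised H\"older inequality (with exponents $r_d=D_t$) to pass from the geometric-mean weight update to the product of per-trial mixtures, then telescoping and lower-bounding $W_T\ge w_T(i)$. This is exactly the mechanism the paper imports from \cite{Adamskiy2019Packs} and reproduces in its ``second proof'' of Theorem~\ref{theorem:bound_fs}, so no further comment is needed.
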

The lemma shows that the regret of AAP-current is constant and does not depend on the number of steps. In the case of mixable games, AAP-current's performance is almost the same as the best expert's performance in terms of the cumulative average losses at any time step.

The optimal choice of $\eta$ depends on the type of the game and should minimise the regret from (\ref{eq:AAP_bound}). For the log-loss game the optimal $\eta = 1$ (Section 2.2 in \cite{vovk_cols}), and for the square-loss game $\eta = 2$ (Lemma 2.5 in \cite{Zhdanov2011PhD}).

\section{Competing with superexperts under delayed feedback} \label{sec:superexperts}
As was shown in the previous section, AAP-current can perform as well as the best expert in terms of the cumulative average losses. However, it is common that the nature of data changes with time, and as a consequence, the best expert also changes. It is important that a prediction strategy adapts to a constantly changing environment. Assume that there are $k+1$ segments that have different best performing `base' experts. The term {\em superexpert} \cite{kalnishkan2018ml_lectures} is referred to the strategy, which can switch between `base' experts. For example, on the first segment, the superexpert will follow the predictions of expert $\mathcal{E}_i$ and on the second segment, it will switch to expert $\mathcal{E}_j$. The goal of the learner is to compete with any such superexpert.

The Fixed-share and the Variable-share algorithms are adaptations of the AA, which allow a learner to compete with superexperts. Though the number of superexperts grows exponentially with the number of switches, it is possible to maintain only the weights of the base experts (\ref{eq:weights_norm}) instead of the superexperts' weights. The main difference of these algorithms and the AA is in the update of the experts' weights. In this section, we modify the Fixed-share and the Variable-share algorithm for the Protocol \ref{protocol_delayed} of the delayed feedback. After the weights' update (\ref{eq:weights_update_aa}) Fixed-share calculates the share updates $\tilde{w}_t(i)$:
\begin{equation}\label{eq:fs_update}
	\tilde{w}_t(i) = (1-\alpha) w_t(i) + \frac{\alpha}{N - 1} \sum_{j \ne i} w_t(j)
\end{equation}
and the Variable-share update is as follows:
\begin{equation}\label{eq:vs_update}
	\tilde{w}_t(i) = (1-\alpha)^{\frac{1}{D_t} \sum_{d=1}^{D_t}\lambda(y_{t,d}, \xi_{t, d}(i))} w_t(i) \\
	+ \sum_{j \ne i} \frac{\left(1 - (1 - \alpha)^{\frac{1}{D_t} \sum_{d=1}^{D_t}\lambda(y_{t,d}, \xi_{t, d}(j))} \right)}{N - 1} w_t(j),
\end{equation}
where $\alpha$ is the probability of switching between experts. 
The generalised predictions (\ref{eq:generalised}) are then calculated with the normalised weights $\tilde{w}^*_t(i)$ instead of $w^*_t(i)$. Note that if $\alpha = 0$ the algorithms coincide with the AAP-current. In Fixed-share each expert shares a fraction $\alpha$ of its weight with other experts. However, if one expert has the best performance for a long period of time, it is not optimal to keep sharing its weight with others. To overcome this problem, in Variable-share experts share their weights only if they have a large loss, but do not share if they perform well. However, if an expert starts to predict badly its weights will be shared with other experts, and the next best expert's weights will recover quickly. Variable-share, however, works only for the games with bounded losses. For this reason, we apply Variable-share for the square-loss game and apply Fixed-share for the log-loss game.

The following is the pseudo-code of the algorithms (see optimal values of $\eta$ for the log-loss and the square-loss games at the end of Section \ref{sec:AA}):
\begin{protocol}[Fixed-share and Variable-share under delayed feedback]~
	{\tt
		\label{aap_incremental}
		\begin{tabbing}
			\quad\=\quad\=\quad\=\quad=\quad\kill
			Input: mixability constant $C$, learning rate $\eta$, switching rate $\alpha$ \\
			1 Initialise experts' weights $\tilde{w}_0 (i)= 1 / N$, $i = 1, \dots, N$ \\
			2 FOR $t=1,2,\ldots$\\
			3 \>\> normalise weights $\tilde{w}^*_{t-1} (i) = \tilde{w}_{t-1} (i) / \sum_{i=1}^N \tilde{w}_{t-1} (i)$ \\
			4 \>\> FOR $d=1,\dots, D_t$ \\
			5 \>\>\> read experts' predictions $\xi_{t, d}(i)$, $i=1,\ldots,N$\\
			6 \>\>\> learner predicts $\gamma_{t, d} \in \Gamma$, such that \\
			\>\>\> $\forall y \in \Omega: \lambda(y, \gamma_{t, d}) \le -\frac{C}{\eta} \ln \sum_{i=1}^N e^{-\eta \lambda(y, \xi_{t, d}(i))} \tilde{w}_{t-1}^{*}(i)$ \\
			7 \>\> END FOR \\
			8 \>\> observe the outcomes $y_{t, d} \in \Omega, ~ d = 1, \dots, D_t$\\
			9 \>\> update the intermediate experts' weights \\
			\>\>\> $w_t(i) = \tilde{w}_{t-1}(i) e^{-\frac{\eta}{D_t} \sum_{d=1}^{D_t} \lambda(y_{t, d}, \xi_{t, d}(i))},~i = 1,\dots,N$ \\
			10 \>\> if Fixed-share: \\
			11 \>\>\> pool = $\sum_{i=1}^N \alpha w_t(i)$\\
			12 \>\>\> update the shares: \\
			\>\>\> $\tilde{w}_t(i) = (1-\alpha) w_t(i) + \frac{1}{N - 1}(\text{pool} - \alpha w_t(i))$ \\
			13 \>\> if Variable-share: \\
			14 \>\>\> pool = $\sum_{i=1}^N \left(1 - (1 - \alpha)^{\frac{1}{D_t} \sum_{d=1}^{D_t} \lambda(y_{t,d}, \xi_{t,d}(i))}\right) w_t(i)$ \\
			15 \>\>\> update the shares: \\
			\>\>\> $\tilde{w}_t(i) = (1-\alpha)^{\frac{1}{D_t} \sum_{d=1}^{D_t}\lambda(y_{t,d}, \xi_{t, d}(i))} w_t(i) + \frac{1}{N - 1} \left(\text{pool} - \left(1 - (1 - \alpha)^{\frac{1}{D_t} \sum_{d=1}^{D_t}\lambda(y_{t,d}, \xi_{t, d}(i))}\right) w_t(i) \right)$\\
			16 END FOR
		\end{tabbing}
	}
\end{protocol}
Note that for the log-loss and the square-loss games the learner outputs predictions according to (\ref{eq:gamma_log}) and (\ref{eq:gamma_square}) respectively with normalised weights $\tilde{w}^*_t(i)$.

We denote $\mathcal{S}_{T, N, k, \bf{t}, \bf{e}}$ to be the superexpert with $T$ number of steps, $N$ base experts and $k$ switches ($k < T$). The tuple $\bf{t}$ divides the data into $k+1$ segments $[t_0, t_1), [t_1, t_2),\cdots,[t_k, t_{k+1})$. The tuple $\bf{e}$ has $k$ elements $(e_0, e_1, \cdots, e_k)$, such that $1 \le e_j \le N$ and $e_j \ne e_{j+1}$. The element $e_j$ denotes the expert $\mathcal{E}_{e_j}$ associated with the $j$th segment $[t_j, t_{j+1})$.

The following lemma provides the upper bound on the cumulative average loss of the Fixed-share algorithm.
\begin{theorem} \label{theorem:bound_fs}
	For a mixable game $\mathcal{G}$, a learning rate $\eta$ and for any superexpert $\mathcal{S}_{T, N, k, \bf{t}, \bf{e}}$ the total average loss of the Fixed-share algorithm with parameter $\alpha$ satisfies
	\begin{equation} \label{eq:bound_fs}
		L_T^{\textrm{average}} \le CL_T^{\textrm{average}} (\mathcal{S}_{T, N, k, \bf{t}, \bf{e}}) + \frac{C}{\eta} \left( \ln N + k \ln \frac{N-1}{\alpha}
		+ (T - 1 - k) \ln \frac{1}{1 - \alpha} \right).
	\end{equation}
\end{theorem}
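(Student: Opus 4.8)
The plan is to reduce the superexpert bound to the single-expert bound from Lemma~\ref{lemma:AAP_bound} by treating each superexpert as an ordinary ``expert'' in an enlarged pool, and then to lower-bound the prior weight that the Fixed-share weight-sharing mechanism implicitly assigns to any given superexpert $\mathcal{S}_{T, N, k, \bf{t}, \bf{e}}$. The key observation is that the Fixed-share update (\ref{eq:fs_update}) is exactly the weight update one obtains by running AAP-current over the exponentially large set of all superexperts with a Markov prior: at each step a superexpert stays with its current base expert with probability $1-\alpha$ and switches to a specific other base expert with probability $\alpha/(N-1)$. First I would make this correspondence precise by showing that the marginal base-expert weights maintained by the algorithm coincide with the sum, over all superexperts currently following that base expert, of the AAP-current weights in the big pool. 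This is the standard ``weight-sharing = efficient simulation of a mixture over paths'' argument, and it lets me invoke the mixability-based bound without ever instantiating the exponential set explicitly.

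Granting that reduction, the next step is to apply Lemma~\ref{lemma:AAP_bound} with $\mathcal{E}_i$ replaced by the chosen superexpert $\mathcal{S} = \mathcal{S}_{T, N, k, \bf{t}, \bf{e}}$, which immediately yields
\begin{equation*}
	L_T^{\textrm{average}} \le C L_T^{\textrm{average}}(\mathcal{S}) + \frac{C}{\eta} \ln \frac{1}{w_0(\mathcal{S})},
\end{equation*}
where $w_0(\mathcal{S})$ is the prior weight of that superexpert under the Markov prior. So the whole theorem comes down to computing $w_0(\mathcal{S})$. A superexpert with $k$ switches over $T$ steps picks its initial base expert with probability $1/N$, then makes exactly $k$ transitions (each of probability $\alpha/(N-1)$) and $T-1-k$ ``stays'' (each of probability $1-\alpha$), giving
\begin{equation*}
	w_0(\mathcal{S}) = \frac{1}{N} \left( \frac{\alpha}{N-1} \right)^{k} (1-\alpha)^{T-1-k}.
\end{equation*}
Substituting this into the logarithmic term and distributing the logarithm over the product reproduces exactly the three summands $\ln N + k \ln\frac{N-1}{\alpha} + (T-1-k)\ln\frac{1}{1-\alpha}$ in (\ref{eq:bound_fs}), completing the argument.

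The main obstacle I anticipate is rigorously justifying the reduction step: one must verify that the per-step loss fed to AAP-current in the enlarged pool agrees with the superexpert's own loss, and—more delicately—that the algorithm's normalized marginal weights $\tilde{w}^*_{t-1}(i)$ genuinely equal the aggregated superexpert weights so that the generalized prediction (\ref{eq:generalised}) computed from the base experts matches the one computed over the full superexpert pool. This requires tracking how the loss update (\ref{eq:weights_update_aa}) and the sharing step (\ref{eq:fs_update}) interleave relative to the transition structure of the prior, and checking that a superexpert's loss on a segment is counted against the correct base expert's weight at each trial. I would verify this by induction on $t$, maintaining the invariant that each base-expert weight equals the total AAP-current weight of all superexperts currently sitting on that base expert; the delayed-feedback averaging $\frac{1}{D_t}\sum_{d=1}^{D_t}\lambda(\cdot)$ enters only through the per-trial loss and is handled uniformly by Lemma~\ref{lemma:AAP_bound}, since that lemma is already stated in terms of the cumulative average losses $L^{\textrm{average}}_T$. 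A secondary subtlety worth flagging is the edge case $e_j \ne e_{j+1}$ in the definition of $\bf{e}$, which guarantees each of the $k$ switches is a genuine transition (and not a self-loop), so that the count of $k$ transition factors and $T-1-k$ stay factors is exact.
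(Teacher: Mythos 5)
Your proposal follows essentially the same route as the paper's first proof of Theorem~\ref{theorem:bound_fs}: apply Lemma~\ref{lemma:AAP_bound} with the superexpert in place of a base expert and with the Markov prior weight $w_0(\mathcal{S}_{T, N, k, \bf{t}, \bf{e}}) = \frac{1}{N}\left(\frac{\alpha}{N-1}\right)^{k}(1-\alpha)^{T-1-k}$, then expand the logarithm. The reduction step you flag as the main obstacle (that Fixed-share's marginal base-expert weights simulate AAP-current over the superexpert pool) is indeed the point the paper's first proof leaves implicit, and is handled rigorously only in the paper's alternative second proof via a direct telescoping of the share weights using Lemmas~\ref{lemma:weights_fs1} and~\ref{lemma:weights_fs2}.
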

The minimum of the bound (\ref{eq:bound_fs}) is achieved by setting $\alpha = \frac{k}{T-1}$. However, the number of steps $T$ and the number of switches $k$ is usually not known in advance. In practice, it is often recommended to find the optimal $\alpha$ experimentally \cite{warmuth_tracking_original}. In the case when the upper bound on the number of steps $\hat{T}$ and the lower bound on the number of switches $\hat{k}$ are known in advance the following corollary is realised.
\begin{corollary} \label{cor:bound_fs}
	Under the conditions of Theorem \ref{theorem:bound_fs} and any positive reals $\hat{T}$ and $\hat{k}$, such that $\hat{k} < \hat{T} - 1$ by setting $\alpha = \hat{k} / (\hat{T} - 1)$, the total average loss of the Fixed-share algorithm can be bounded by
	\begin{equation} \label{eq:cor_fs}
		L_T^{\textrm{average}} \le C L_T^{\textrm{average}} (\mathcal{S}_{T, N, k, \bf{t}, \bf{e}}) + \frac{C}{\eta} \left( \ln N + k \Big(\ln \frac{\hat{T} - 1}{\hat{k}} + \ln(N-1)\Big) + \hat{k}   \right),
	\end{equation}
	where $\mathcal{S}_{T, N, k, \bf{t}, \bf{e}}$ is any superexpert such that $T \le \hat{T}$ and $k \ge \hat{k}$. 
\end{corollary}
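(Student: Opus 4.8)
The plan is to derive Corollary \ref{cor:bound_fs} directly from Theorem \ref{theorem:bound_fs} by substituting the prescribed value $\alpha = \hat{k}/(\hat{T}-1)$ into the bound (\ref{eq:bound_fs}) and then controlling the resulting expression using the two hypotheses $T \le \hat{T}$ and $k \ge \hat{k}$. Since the theorem already supplies the general inequality for an arbitrary switching rate, no new algorithmic argument is required; the corollary is purely a specialisation-and-simplification step. First I would split the parenthesised term of (\ref{eq:bound_fs}) into the ``switching'' part $k\ln\frac{N-1}{\alpha}$ and the ``staying'' part $(T-1-k)\ln\frac{1}{1-\alpha}$, and treat each separately.

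For the switching part, substituting $\alpha = \hat{k}/(\hat{T}-1)$ gives $\frac{N-1}{\alpha} = \frac{(N-1)(\hat{T}-1)}{\hat{k}}$, so that $k\ln\frac{N-1}{\alpha} = k\big(\ln(N-1) + \ln\frac{\hat{T}-1}{\hat{k}}\big)$, which already matches the middle term of (\ref{eq:cor_fs}) exactly. This step is routine and requires no estimation.

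The main work is the staying part. With $1-\alpha = \frac{\hat{T}-1-\hat{k}}{\hat{T}-1}$ we have $\ln\frac{1}{1-\alpha} = \ln\frac{\hat{T}-1}{\hat{T}-1-\hat{k}} \ge 0$, where positivity of $\hat{T}-1-\hat{k}$ is guaranteed by the assumption $\hat{k} < \hat{T}-1$. I would first reduce the prefactor by monotonicity: since $T \le \hat{T}$ and $k \ge \hat{k}$ together give $T-1-k \le \hat{T}-1-\hat{k}$ (and $T-1-k \ge 0$ because $k < T$), and since the logarithm is nonnegative, the staying part is at most $(\hat{T}-1-\hat{k})\ln\frac{\hat{T}-1}{\hat{T}-1-\hat{k}}$. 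I would then close the estimate with the elementary inequality $\ln(1+u) \le u$: writing $\frac{\hat{T}-1}{\hat{T}-1-\hat{k}} = 1 + \frac{\hat{k}}{\hat{T}-1-\hat{k}}$ yields $\ln\frac{\hat{T}-1}{\hat{T}-1-\hat{k}} \le \frac{\hat{k}}{\hat{T}-1-\hat{k}}$, and multiplying through by $\hat{T}-1-\hat{k}$ bounds the staying part by $\hat{k}$. Collecting the three pieces reproduces (\ref{eq:cor_fs}).

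The only place demanding care is this last reduction: one must invoke \emph{both} hypotheses $T \le \hat{T}$ and $k \ge \hat{k}$ simultaneously to justify $T-1-k \le \hat{T}-1-\hat{k}$, and must check that $\ln\frac{1}{1-\alpha}$ is nonnegative so that enlarging the prefactor preserves the inequality direction. After that, the $\ln(1+u)\le u$ estimate collapses $(\hat{T}-1-\hat{k})\ln\frac{\hat{T}-1}{\hat{T}-1-\hat{k}}$ to $\hat{k}$ cleanly, and I expect no genuine obstacle beyond bookkeeping of signs and the condition $\hat{k} < \hat{T}-1$ that keeps $1-\alpha$ strictly positive.
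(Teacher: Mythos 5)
Your proposal is correct and follows essentially the same route as the paper: substitute $\alpha=\hat{k}/(\hat{T}-1)$ into the bound of Theorem~\ref{theorem:bound_fs}, identify the switching term exactly, and collapse the staying term to $\hat{k}$ via $\ln(1+x)\le x$ together with $T-1-k\le\hat{T}-1-\hat{k}$. The only cosmetic difference is the order in which you apply the monotonicity in $T-1-k$ and the logarithm inequality (the paper also records the derivative computation motivating the choice of $\alpha$, which is not needed for the bound itself).
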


The following lemma provides the upper bound on the loss of the Variable-share algorithm.
\begin{theorem} \label{theorem:bound_vs}
	Let $\mathcal{G}$ be a mixable game with a learning rate $\eta$, and a loss at each step bounded by an interval $[0, 1]$. Then for any superexpert $\mathcal{S}_{T, N, k, \bf{t}, \bf{e}}$ the total average loss of the Variable-share algorithm with parameter $\alpha$ satisfies
	\begin{equation} \label{eq:bound_vs}
		L_T^{\textrm{average}} \le C\Big(1 + \frac{1}{\eta} \ln \frac{1}{1-\alpha} \Big) L_T^{\textrm{average}} (\mathcal{S}_{T, N, k, \bf{t}, \bf{e}}) +\frac{C}{\eta} \left(\ln N + k \Big( \eta + \ln \frac{N-1}{\alpha (1 - \alpha)} \Big) \right).
	\end{equation}
\end{theorem}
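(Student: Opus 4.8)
The plan is to reduce the delayed-feedback pack setting to an effective one-outcome game and then run the weight-tracking argument behind the Fixed-share bound (Theorem \ref{theorem:bound_fs}), but with the share step replaced by the variable-share update (\ref{eq:vs_update}). Write $\bar{\ell}_{t,i} = \frac{1}{D_t}\sum_{d=1}^{D_t}\lambda(y_{t,d},\xi_{t,d}(i))$ for the average loss of $\mathcal{E}_i$ at step $t$. The first step is to establish the per-round inequality that plays the role mixability plays in Lemma \ref{lemma:AAP_bound}, namely
\[ \frac{1}{D_t}\sum_{d=1}^{D_t}\lambda(y_{t,d},\gamma_{t,d}) \le -\frac{C}{\eta}\ln\sum_{i=1}^N e^{-\eta\bar{\ell}_{t,i}}\tilde{w}^*_{t-1}(i). \]
This follows in two moves: for each delay $d$, admissibility of $C$ gives $\lambda(y_{t,d},\gamma_{t,d})\le C\,g_{t,d}(y_{t,d})$ via (\ref{eq:gammaAA}); then, averaging over $d$ and using that the generalised prediction (\ref{eq:generalised}) is concave as a function of the vector of per-delay expert losses (log-sum-exp is convex), Jensen's inequality pushes the average inside and replaces each per-delay loss by $\bar{\ell}_{t,i}$. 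The pack/delay structure thereby disappears and each expert behaves like a single expert incurring loss $\bar{\ell}_{t,i}$ at round $t$, exactly as in the proof of Lemma \ref{lemma:AAP_bound}.

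Next I would telescope. Let $\tilde{W}_t=\sum_i\tilde{w}_t(i)$. A direct computation shows that the variable-share update (\ref{eq:vs_update}) conserves total weight, $\tilde{W}_t=\sum_i w_t(i)$, so with the intermediate update $w_t(i)=\tilde{w}_{t-1}(i)e^{-\eta\bar{\ell}_{t,i}}$ the displayed inequality becomes $\frac{1}{D_t}\sum_d\lambda(y_{t,d},\gamma_{t,d})\le -\frac{C}{\eta}\ln(\tilde{W}_t/\tilde{W}_{t-1})$. Summing over $t$ and using $\tilde{W}_0=1$ gives $L_T^{\textrm{average}}\le -\frac{C}{\eta}\ln\tilde{W}_T$, and since $\tilde{W}_T\ge\tilde{w}_T(e_k)$ the task reduces to lower-bounding the final weight carried by the comparator's last base expert.

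Finally I would unroll $\tilde{w}_T(e_k)$ along the $k+1$ segments of $\mathcal{S}_{T, N, k, \bf{t}, \bf{e}}$. Within a segment the active expert $e$ retains, at each step, at least a factor $(1-\alpha)^{\bar{\ell}_{t,e}}e^{-\eta\bar{\ell}_{t,e}}$ of its weight (keep only its own term in (\ref{eq:vs_update})); taking logarithms and summing over the segment converts the $(1-\alpha)^{\bar{\ell}}$ factors into the multiplicative loss factor $1+\frac{1}{\eta}\ln\frac{1}{1-\alpha}$ of (\ref{eq:bound_vs}), while the uniform prior contributes $\frac{C}{\eta}\ln N$ and each of the $k$ switches must be shown to contribute the constant $\frac{C}{\eta}\big(\eta+\ln\frac{N-1}{\alpha(1-\alpha)}\big)$. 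Collecting these three sources of cost, together with $\sum_t\bar{\ell}_{t,e_{j(t)}}=L_T^{\textrm{average}}(\mathcal{S}_{T, N, k, \bf{t}, \bf{e}})$, yields (\ref{eq:bound_vs}).

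I expect the switch step to be the main obstacle. Unlike Fixed-share, where a fixed fraction $\frac{\alpha}{N-1}$ always crosses to the next expert, variable-share shares weight only in proportion to the departing expert's loss, through the factor $1-(1-\alpha)^{\bar{\ell}}$ in (\ref{eq:vs_update}); bounding this below via $1-(1-\alpha)^x\ge\alpha x$ for $x\in[0,1]$ (this is precisely where the hypothesis that the loss lies in $[0,1]$ is needed) and making sure the transferred weight is not dissipated before the next expert becomes active is the delicate accounting required to extract the clean per-switch constant, which I would carry out following \cite{warmuth_tracking_original}. The within-segment retention and the telescoping are, by contrast, routine adaptations of Lemma \ref{lemma:AAP_bound} and of the Fixed-share argument of Theorem \ref{theorem:bound_fs}.
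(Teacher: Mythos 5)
Your outline follows essentially the same route as the paper: the per-round inequality you derive via Jensen's inequality on the concavity of $-\frac{1}{\eta}\ln\sum_i w_i e^{-\eta x_i}$ is exactly the inequality the paper obtains with the generalised H\"older inequality (the two arguments are equivalent), the conservation of total weight under the variable-share update and the resulting telescoping $L_T^{\textrm{average}}\le-\frac{C}{\eta}\ln\tilde{w}_T(e_k)$ match the paper's (\ref{AA_prop_delays}), and the within-segment retention factor $\left[e^{-\eta}(1-\alpha)\right]^{\bar{\ell}}$ is the paper's Lemma \ref{lemma:lemma7_herbster}.

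There is, however, one genuine gap at exactly the point you flag as the obstacle, and it is worth naming because the switch accounting \emph{as you describe it} would fail. You say the weight crossing to the next expert at a switch is proportional to $\frac{\alpha}{N-1}$ times the departing expert's loss via $1-(1-\alpha)^x\ge\alpha x$; but that loss at a single trial can be arbitrarily close to zero, so a one-trial transfer gives no constant per-switch cost (its logarithm is unbounded). The paper closes this with two ingredients you do not mention: (i) a collapsing step (Lemma \ref{lemma:collapsed}) that merges any segment on which the \emph{outgoing} expert's subsequent average loss is below one, at a cost of at most one unit of average loss per merge, so that after every surviving switch the outgoing expert accrues total average loss at least one; and (ii) Lemma \ref{lemma:lemma8_herster}, which sums the transferred weight over the whole window from the switch until the outgoing expert's accumulated loss first enters $[1,2)$, and uses the inequality $b^c(c+db^d)\ge b$ (Lemma \ref{lemma:lemma5_herbster}) to show the incoming expert retains at least a factor $\frac{\alpha}{N-1}e^{-\eta}(1-\alpha)$ of the outgoing expert's pre-switch weight; this is where the per-switch term $\eta+\ln\frac{N-1}{\alpha(1-\alpha)}$ and hence the $k\eta$ in (\ref{eq:bound_vs}) actually come from. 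Since you explicitly defer this step to \cite{warmuth_tracking_original}, where both ingredients appear, the plan is salvageable, but the sketch is not self-contained precisely where the theorem's bound is determined.
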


The coefficient in front of the loss of the superexpert in (\ref{eq:bound_vs}) is greater than one, meaning that asymptotically the loss of Variable-share is worse than the loss of the superexpert $\mathcal{S}_{T, N, k, \bf{t}, \bf{e}}$. The following corollary improves the asymptotic behaviour of the upper bound on Variable-share.
\begin{corollary} \label{cor:bound_vs}
	Under the conditions of Theorem \ref{theorem:bound_fs} and any positive reals $\hat{L}$ and $\hat{k}$, by setting $\alpha = \frac{\hat{k}}{2\hat{k} + \hat{L}}$, the loss of the Variable-share algorithm can be bounded as
	\begin{equation} \label{eq:cor_vs1}
		L_T^{\textrm{average}} \le C L_T^{\textrm{average}} (\mathcal{S}_{T, N, k, \bf{t}, \bf{e}}) 
		+ \frac{C}{\eta} \left( \ln N + k \Big(\ln \Big( \frac{\hat{L}}{\hat{k}} \Big) + \ln(N-1) + \ln \frac{9}{2} + \eta \Big) + \hat{k} \right),
	\end{equation}
	where $\mathcal{S}_{T, N, k, \bf{t}, \bf{e}}$ is any superexpert such as $L^{\textrm{average}}_T (\mathcal{S}_{T, N, k, \bf{t}, \bf{e}}) \le \hat{L}$, and in addition $\hat{k} \le \hat{L}$.  
	
	For any superexpert $\mathcal{S}_{T, N, k, \bf{t}, \bf{e}}$ such that $L^{\textrm{average}}_T (\mathcal{S}_{T, N, k, \bf{t}, \bf{e}}) \le \hat{L}$ and $\hat{k} \ge \hat{L}$, the upper bound is
	\begin{equation} \label{eq:cor_vs2}
		L_T^{\textrm{average}} \le C L_T^{\textrm{average}} (\mathcal{S}_{T, N, k, \bf{t}, \bf{e}}) + \frac{C}{\eta} \left( \ln N + k \Big(\ln (N -1) + \ln \frac{9}{2} + \eta \Big) + \frac{1}{2}\hat{k} \right).
	\end{equation}
\end{corollary}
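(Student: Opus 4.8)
The plan is to start from the Variable-share bound of Theorem \ref{theorem:bound_vs} and first remove the inflated coefficient $C\bigl(1 + \frac{1}{\eta}\ln\frac{1}{1-\alpha}\bigr)$ standing in front of the superexpert loss. I would split this coefficient as $C$ plus the excess $\frac{C}{\eta}\ln\frac{1}{1-\alpha}$, and then use the hypothesis $L_T^{\textrm{average}}(\mathcal{S}_{T,N,k,\bf{t},\bf{e}}) \le \hat{L}$ together with $\ln\frac{1}{1-\alpha}\ge 0$ to bound the excess contribution by $\frac{C}{\eta}\,\hat{L}\ln\frac{1}{1-\alpha}$, absorbing it into the additive term. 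This leaves exactly $C$ as the multiplier of $L_T^{\textrm{average}}(\mathcal{S}_{T,N,k,\bf{t},\bf{e}})$, which is what both (\ref{eq:cor_vs1}) and (\ref{eq:cor_vs2}) demand.

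Next I would substitute the prescribed $\alpha = \hat{k}/(2\hat{k}+\hat{L})$, which yields $1-\alpha = (\hat{k}+\hat{L})/(2\hat{k}+\hat{L})$ and $\alpha(1-\alpha) = \hat{k}(\hat{k}+\hat{L})/(2\hat{k}+\hat{L})^2$. Two logarithmic quantities then need to be controlled: $\ln\frac{1}{1-\alpha}$ coming from the absorbed excess, and $\ln\frac{1}{\alpha(1-\alpha)}$ coming from the original additive term. For the first I would apply $\ln(1+x)\le x$ to get $\ln\frac{1}{1-\alpha} = \ln\bigl(1 + \frac{\hat{k}}{\hat{k}+\hat{L}}\bigr) \le \frac{\hat{k}}{\hat{k}+\hat{L}}$, so the absorbed excess is at most $\frac{C}{\eta}\cdot\frac{\hat{k}\hat{L}}{\hat{k}+\hat{L}}$.

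For the second quantity I would set $r = \hat{L}/\hat{k}$ and note that $\frac{1}{\alpha(1-\alpha)} = \frac{(2+r)^2}{1+r}$; writing $v=1+r$ this equals $v + 2 + 1/v$, which is increasing for $v\ge 1$, and the elementary factorisation $(7r+8)(r-1)\ge 0$ covers $r\ge 1$, so together these give $\frac{(2+r)^2}{1+r}\le \frac{9}{2}\max\{1,r\}$. Hence $\ln\frac{1}{\alpha(1-\alpha)} \le \ln\frac{9}{2} + \ln\frac{\hat{L}}{\hat{k}}$ when $\hat{k}\le\hat{L}$, and $\ln\frac{1}{\alpha(1-\alpha)} \le \ln\frac{9}{2}$ when $\hat{k}\ge\hat{L}$, which is what produces the $\ln\frac{9}{2}$ constant and the conditional $\ln(\hat{L}/\hat{k})$ term distinguishing (\ref{eq:cor_vs1}) from (\ref{eq:cor_vs2}).

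Finally I would treat the absorbed excess $\frac{\hat{k}\hat{L}}{\hat{k}+\hat{L}}$ in the same two cases: since $\hat{L}/(\hat{k}+\hat{L})\le 1$ unconditionally, it is at most $\hat{k}$, giving the $+\hat{k}$ of (\ref{eq:cor_vs1}); and when $\hat{k}\ge\hat{L}$ one has $\hat{L}/(\hat{k}+\hat{L})\le \frac12$, giving the sharper $\frac12\hat{k}$ of (\ref{eq:cor_vs2}). Collecting the $C\,L_T^{\textrm{average}}(\mathcal{S}_{T,N,k,\bf{t},\bf{e}})$ term, the $\frac{C}{\eta}\bigl(\ln N + k(\cdots)\bigr)$ term and the absorbed excess then reproduces both displayed inequalities. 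The one genuinely non-routine step is the scalar inequality $\frac{(2+r)^2}{1+r}\le \frac{9}{2}\max\{1,r\}$: its equality at $r=1$ is precisely what makes $\alpha=\hat{k}/(2\hat{k}+\hat{L})$ the correct tuning and pins down the constant $9/2$, while everything else is substitution and the $\ln(1+x)\le x$ estimate.
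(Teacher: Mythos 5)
Your proposal is correct and follows essentially the same route as the paper's proof: both decompose the Theorem~\ref{theorem:bound_vs} bound so that the excess $\frac{C}{\eta}\ln\frac{1}{1-\alpha}\,L_T^{\textrm{average}}(\mathcal{S}_{T,N,k,\bf{t},\bf{e}})$ is absorbed into the additive term via $L_T^{\textrm{average}}(\mathcal{S}_{T,N,k,\bf{t},\bf{e}})\le\hat{L}$ and $\ln(1+x)\le x$, substitute $\alpha=\hat{k}/(2\hat{k}+\hat{L})$, and split into the cases $\hat{k}\le\hat{L}$ and $\hat{k}\ge\hat{L}$ with the same $\frac{9}{2}$ constant and the same $\hat{k}$ versus $\frac{1}{2}\hat{k}$ bounds on $\frac{\hat{k}\hat{L}}{\hat{k}+\hat{L}}$. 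The only cosmetic differences are that the paper additionally motivates the choice of $\alpha$ by setting $\partial f/\partial\alpha=0$, and bounds the two case-specific ratios separately rather than through your unified inequality $\frac{(2+r)^2}{1+r}\le\frac{9}{2}\max\{1,r\}$; these amount to the same computation.
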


Lemma \ref{lemma:AAP_bound} shows that the regret between AAP-current and the best expert is $O(1)$. It is possible to compete with the best superexpert $\mathcal{S}_{T, N, k, \bf{t}, \bf{e}}$ which can switch between experts. Lemma \ref{theorem:bound_fs} and Corollary \ref{cor:bound_fs} provide the upper bounds for cumulative average losses of Fixed-share. The regret between Fixed-share and the best superexpert $\mathcal{S}_{T, N, k, \bf{t}, \bf{e}}$ with the number of base experts $N$, the number of switches $k \ge \hat{k}$ and the number of steps $T \le \hat{T}$ is of the order $O(k\ln N + k \ln (\hat{T}/\hat{k}))$. The Variable-share algorithm has a more sophisticated weights' update and a better bound which does not depend on the number of steps $T$. However, it can only be applied to games with bounded losses. Lemma \ref{theorem:bound_vs} and Corollary \ref{cor:bound_vs} provide the upper bounds for cumulative average losses of Variable-share. The regret between Variable-share and the best superexpert is of the order $O(k \ln N + k \ln (\hat{L} / k))$, for any superexpert $\mathcal{S}_{T, N, k, \bf{t}, \bf{e}}$ such as $L^{\textrm{average}}_T(\mathcal{S}_{T, N, k, \bf{t}, \bf{e}}) \le \hat{L}$. We prove the worst-case upper bounds on the performance of Fixed-share and Variable-share under delayed feedback scenario. These bounds are similar to those in \cite{warmuth_tracking_original}, but of a different kind as they provide guarantees on cumulative average losses instead of cumulative losses. The proofs of Theorems \ref{theorem:bound_fs} and \ref{theorem:bound_vs} are similar to those without delays in \cite{warmuth_tracking_original} and can be found in Appendix.

\section{Experiments} \label{sec:results}
In this section, we perform experiments on the NAB benchmark and compare the performance of AAP-current, Fixed-share and Variable-share with other algorithms. All the results of this paper are available openly\textsuperscript{\ref{anomaly}}. The NAB benchmark has a scoreboard, and anyone can test their algorithms and take part in the competition. The current number of models is 15.  We run the algorithms with experts, which are the algorithms available on the NAB benchmark, for different parameters of $\alpha$ starting from 0 to 0.3. We also artificially insert different lengths of delays in the data to test our implementations. For log-loss, we apply the Fixed-share and for square-loss, we apply the Variable-share algorithms. Note, that for $\alpha = 0$ the predictions coincide with the AAP's predictions. The NAB data corpus includes two artificial and five real data types each of which includes several time-series. Examples of the real time-series include AWS server metrics; temperature sensor data of an industrial machine: a shut-down, catastrophic failure; the key hold timings for several users of a computer; online advertisement clicking rates: cost-per-click and cost per thousand impressions\textsuperscript{\ref{nab}}. One of the artificial datasets does not include anomalies. The total number of time-series is 58. The data contains both contextual and collective anomalies and the nature of data changes over time. The anomalies in these datasets do not come as single points, instead, the datasets contain anomaly `windows'. It helps to detect anomalies in advance and take appropriate preventive measures.

\subsection{Analysis of losses and classification metrics}
First, we compare algorithms in terms of losses and classification metrics. Table \ref{table:auc} shows the Area under Curve (AUC) of different algorithms calculated on all time-series data available at NAB. The resulting table includes AAP-current (AAP), Fixed-share, and Variable-share for different parameters $\alpha$. The columns show AUC for different delays, which vary from one to 100. Delay one means that the feedback comes immediately after we predict the current time step, whereas when the delay is 100, we receive feedback after we observe and make predictions for 100 points. We also include the random delay, which at each time step assigns a random delay from 20 to 100. For the comparison, we include the top three competitor algorithms with the highest values of AUC. We can see from the table that both Fixed-share and Variable-share have significantly higher AUC compare to randomCutForest, which has the highest AUC on NAB. Only AAP with square-loss and delay 100 has the AUC lower than randomCutForest. We observe that lower delay results in higher AUC, which is expected. The more frequent feedback leads to the more frequent weights update, and therefore better prediction accuracy. We can see that for large delays the performance of AAP degrades significantly, whereas Fixed-share and Variable-share perform much better. We also observe that the larger switching rates $\alpha$ result in slightly better performance. We have similar results for F-score at Table \ref{table:f_score}. The F-scores in the table are the maximum F-scores for each algorithm. The performance in terms of F-score of AAP, Fixed-share, and Variable-share for all delays and switching rates are significantly better compared to algorithms from NAB.

Table \ref{table:log_loss} illustrates the total log-loss of different algorithms and the top three competitors with the lowest log-loss. We can see that only AAP with the square-loss and delays 50 and more has the losses higher than randomCutForest, which is the best competitor in terms of log-loss. Different from the classification metrics, the lowest log-losses achieved with smaller switching rates $\alpha$, such as 0.05 and 0.10. Similar results are observed for the total square-loss at Table \ref{table:square_loss}.

\begin{table}[ht]
	\begin{center}
		\begin{tabular}{|c|ccccc|}
			\hline
			algorithm \textbackslash \hspace{0.1cm} delay &      1 &      20 &      50 &      100 &      \makecell{random\\ $\big[20, 100\big]$} \\
			\hline
			AAP, log-loss     &  0.997 &  0.938 &  0.809 &  0.696 &  0.771 \\
			Fixed, $\alpha = 0.01$   &  0.998 &  0.970 &  0.922 &  0.833 &  0.894 \\
			Fixed, $\alpha = 0.05$   &  0.998 &  0.974 &  0.936 &  0.869 &  0.912 \\
			Fixed, $\alpha = 0.10$  &  0.998 &  0.975 &  0.940 &  0.881 &  0.918 \\
			Fixed, $\alpha = 0.30$   &  0.998 &  0.975 &  0.942 &  0.893 &  0.920 \\
			\hline
			AAP, square-loss  &  0.991 &  0.826 &  0.657 &  0.581 &  0.639 \\
			Variable, $\alpha = 0.01$ &  0.997 &  0.966 &  0.926 &  0.874 &  0.904 \\
			Variable, $\alpha = 0.05$  &  0.998 &  0.976 &  0.949 &  0.906 &  0.925 \\
			Variable, $\alpha = 0.10$  &  0.998 &  0.978 &  0.954 &  0.909 &  0.929 \\
			Variable, $\alpha = 0.30$  &  0.998 &  0.979 &  0.955 &  0.908 &  0.929 \\
			\hline
			randomCutForest  & && 0.616&&\\
			knncad       &&&      0.600 &&\\
			skyline         &&&  0.566 &&\\
			\hline
		\end{tabular}
	\end{center}
	\caption{AUC}
	\label{table:auc}
\end{table}

\begin{table}[ht]
	\begin{center}
		\begin{tabular}{|c|ccccc|}
			\hline
			algorithm \textbackslash \hspace{0.1cm} delay &      1 &      20 &      50 &      100 &      \makecell{random\\ $\big[20, 100\big]$} \\
			\hline
			AAP, log-loss      &  0.978 &  0.684 &  0.418 &  0.296 &  0.372 \\
			Fixed, $\alpha = 0.01$      &  0.988 &  0.832 &  0.651 &  0.455 &  0.596 \\
			Fixed, $\alpha = 0.05$     &  0.990 &  0.863 &  0.714 &  0.540 &  0.667 \\
			Fixed, $\alpha = 0.10$     &  0.990 &  0.877 &  0.744 &  0.584 &  0.701 \\
			Fixed, $\alpha = 0.30$     &  0.984 &  0.897 &  0.788 &  0.650 &  0.748 \\
			\hline
			AAP, square-loss   &  0.961 &  0.521 &  0.319 &  0.237 &  0.300 \\
			Variable, $\alpha = 0.01$   &  0.979 &  0.809 &  0.677 &  0.553 &  0.642 \\
			Variable, $\alpha = 0.05$   &  0.983 &  0.870 &  0.775 &  0.672 &  0.740 \\
			Variable, $\alpha = 0.10$  &  0.984 &  0.888 &  0.802 &  0.690 &  0.763 \\
			Variable, $\alpha = 0.30$  &  0.987 &  0.896 &  0.791 &  0.677 &  0.751 \\
			\hline
			randomCutForest   &&& 0.233 &&\\
			htmjava    &&&        0.218 &&\\
			knncad         &&&    0.216 &&\\
			\hline
		\end{tabular}
	\end{center}
	\caption{F-score}
	\label{table:f_score}
\end{table}

\begin{table}[ht]
	\begin{center}
		\begin{tabular}{|c|ccccc|}
			\hline
			algorithm \textbackslash \hspace{0.1cm} delay &      1 &      20 &      50 &      100 &      \makecell{random\\ $\big[20, 100\big]$} \\
			\hline
			AAP, log-loss    &  12.1 &  61.1 &   95.3 &  106.2 &  101.4 \\
			Fixed, $\alpha = 0.01$     &  11.8 &  36.3 &   61.5 &   87.5 &   69.9 \\
			Fixed, $\alpha = 0.05$     &  19.9 &  37.6 &   56.6 &   78.0 &   63.2 \\
			Fixed, $\alpha = 0.10$    &  28.4 &  42.6 &   58.3 &   76.4 &   63.8 \\
			Fixed, $\alpha = 0.30$    &  56.8 &  64.2 &   73.7 &   84.6 &   77.0 \\
			\hline
			AAP, square-loss  &  17.4 &  90.0 &  109.6 &  115.4 &  112.3 \\
			Variable, $\alpha = 0.01$  &  12.4 &  40.6 &   59.2 &   75.6 &   64.8 \\
			Variable, $\alpha = 0.05$  &  11.6 &  34.1 &   49.7 &   66.0 &   56.1 \\
			Variable, $\alpha = 0.10$ &  11.3 &  32.9 &   48.8 &   66.2 &   55.9 \\
			Variable, $\alpha = 0.30$ &  11.3 &  35.6 &   52.5 &   70.0 &   60.0 \\
			\hline
			randomCutForest   &&& 107.5 &&\\
			htmjava        &&&    153.9 &&\\
			numentaTM     &&&     161.4 &&\\
			\hline
		\end{tabular}
	\end{center}
	\caption{Total logarithmic loss ($\times 10^3$)}
	\label{table:log_loss}
\end{table}

\begin{table}[ht]
	\begin{center}
		\begin{tabular}{|c|ccccc|}
			\hline
			algorithm \textbackslash \hspace{0.1cm} delay &      1 &      20 &      50 &      100 &      \makecell{random\\ $\big[20, 100\big]$} \\
			\hline
			AAP, log-loss      &   2.9 &  16.5 &  26.4 &  29.0 &  27.6 \\
			Fixed, $\alpha = 0.01$      &   2.5 &   9.7 &  17.4 &  24.9 &  19.6 \\
			Fixed, $\alpha = 0.05$      &   3.0 &   8.9 &  15.1 &  21.8 &  17.1 \\
			Fixed, $\alpha = 0.10$     &   4.1 &   9.2 &  14.6 &  20.7 &  16.4 \\
			Fixed, $\alpha = 0.30$     &  10.3 &  13.4 &  17.1 &  21.2 &  18.3 \\
			\hline
			AAP, square-loss   &   3.8 &  23.5 &  28.9 &  30.0 &  29.5 \\
			Variable, $\alpha = 0.01$   &   2.9 &  10.7 &  16.4 &  21.3 &  17.9 \\
			Variable, $\alpha = 0.05$   &   2.8 &   8.8 &  13.6 &  18.6 &  15.2 \\
			Variable, $\alpha = 0.10$  &   2.7 &   8.6 &  13.6 &  18.8 &  15.3 \\
			Variable, $\alpha = 0.30$  &   2.7 &  10.2 &  15.3 &  19.9 &  16.9 \\
			\hline
			randomCutForest    &&& 29.4 &&\\
			htmjava            &&& 31.5 &&\\
			numentaTM       &&&   31.8 &&\\
			\bottomrule
		\end{tabular}
	\end{center}
	\caption{Total square loss ($\times 10^3$)}
	\label{table:square_loss}
\end{table}

\subsection{Visualisation of predictions}
In this section, we visualise the predictions of our proposed approaches for different $\alpha$ and delays. Figure \ref{fig:predictions_fixed10d20} shows the predictions of Fixed-share with $\alpha = 0.1$ and delay 20 for three time-series from the real dataset with known anomaly causes. The blue lines correspond to the algorithm's predictions, whereas the orange lines represent the anomaly windows. Figure \ref{fig:predictions_variable10d100} illustrates the predictions of Variable-share with $\alpha = 0.1$ and delay 100. The optimal thresholds for these algorithms that maximise the F-score are 0.38 and 0.24 for Fixed-share and Variable-share respectively. It means that anything which is above that threshold should be classified as an anomaly. 
It is consistent with the figures, and it is clear that the algorithms `catch' the shapes of the anomaly windows. Though Fixed-share with lower delay produces more confident predictions as it assigns higher probabilities within the anomaly windows. For comparison figure \ref{fig:predictions_randomcutforest} illustrates the predictions of randomCutForest, which was the best NAB competitor in terms of classification metrics and loss functions. We can see that randomCutForest's predictions are more `spiky'. The optimal threshold of randomCutForest on the whole dataset is around 0.12. 

\begin{figure}[ht]
	\begin{center}
		\includegraphics[scale=0.25]{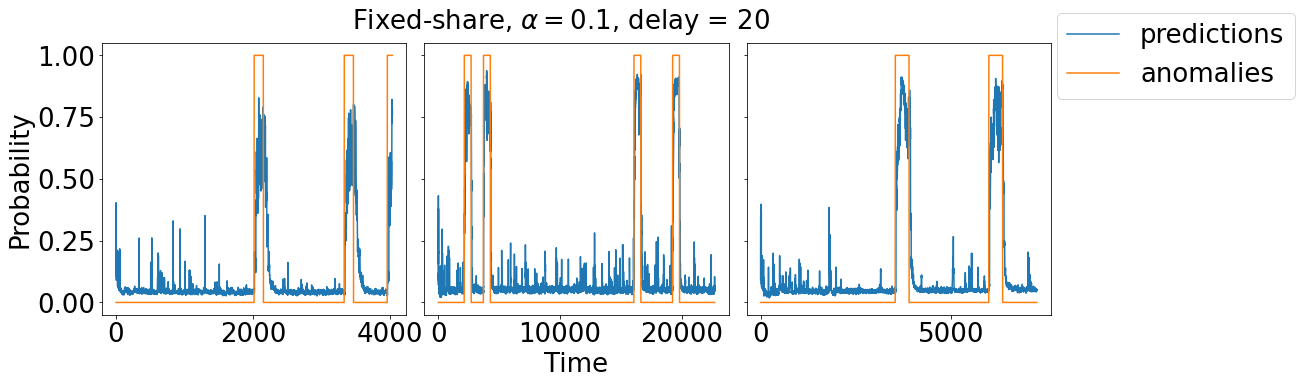}
		\caption{Predictions of Fixed-share with $\alpha = 0.1$}
		\label{fig:predictions_fixed10d20}
	\end{center}
\end{figure}

\begin{figure}[ht]
	\begin{center}
		\includegraphics[scale=0.25]{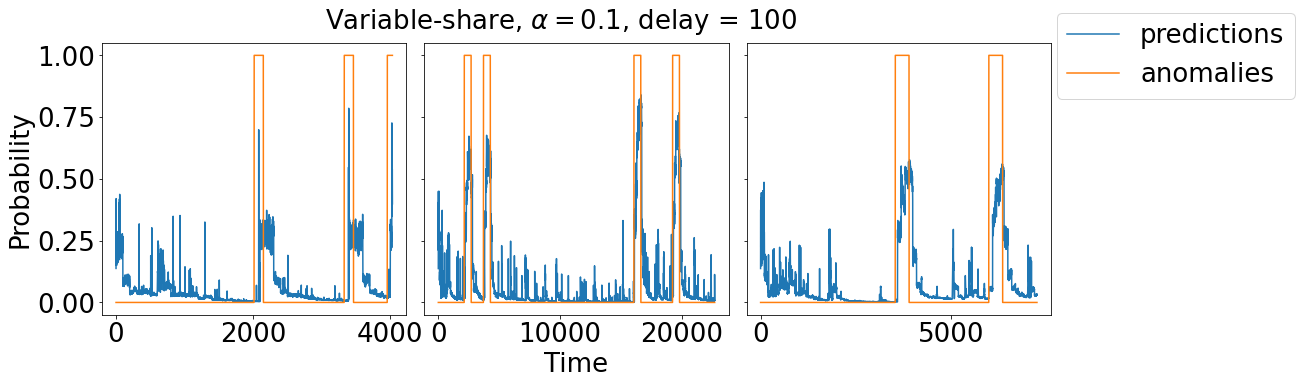}
		\caption{Predictions of Variable-share with $\alpha = 0.5$}
		\label{fig:predictions_variable10d100}
	\end{center}
\end{figure}

\begin{figure}[ht]
	\begin{center}
		\includegraphics[scale=0.25]{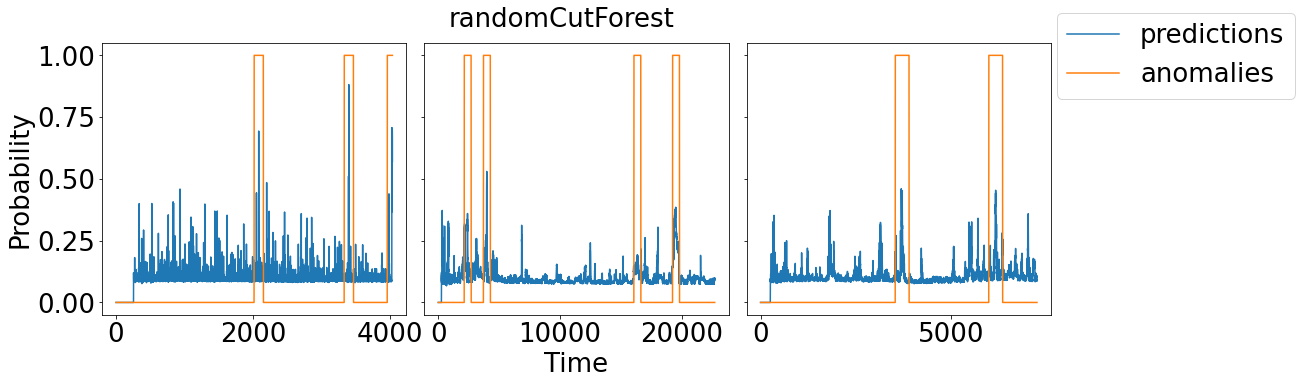}
		\caption{Predictions of randomCutForest}
		\label{fig:predictions_randomcutforest}
	\end{center}
\end{figure}

\subsection{Weights analysis} 
In this section, we analyse the weight updates of the base experts for AAP-current (AAP), Fixed-share, and Variable-share with delay 100. Figure \ref{fig:weights_update} illustrates the normalised weight updates of the algorithms for the real dataset with known causes of the machine system failure, which contains time-series of the machine's temperatures. We can observe from the graphs that AAP with log-loss quickly identifies the best expert and follows the predictions of randomCutForest. AAP with square-loss follows the predictions of randomCutForest and switches to skyline at the end. In the case of AAP, the best expert usually acquires the largest weight close to one, whereas other experts almost do not participate in making predictions. Fixed-share observes less drastic weight changes compared to AAP. For $\alpha = 0.5$. we observe that several expert models acquire large weights. However, other models have non-zero weights and therefore participate in making predictions. The impact of $\alpha$ becomes more obvious when we increase it even more. For Fixed-share with a switching rate $\alpha = 0.3$  the base experts share the greatest amount of their weights and therefore participate more in making predictions. We observe a different behaviour in the case of Variable-share. If one expert performs well for a long period of time, it stops sharing its weight with others. However, if an expert starts to suffer a large loss, its weights will be shared with other experts, which will help the recovery of the weights of the next best expert. We can see from the graphs that Variable-share allows rapid switching between base experts, reflecting their current performance. 

\begin{figure}[ht] 
	\begin{subfigure}[b]{0.5\linewidth}
		\centering
		\includegraphics[width=0.95\linewidth]{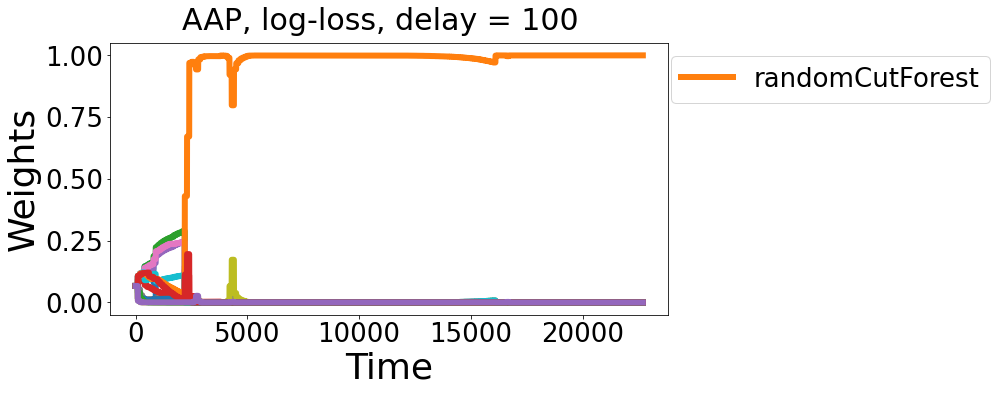} 
		\vspace{4ex}
	\end{subfigure}
	\begin{subfigure}[b]{0.5\linewidth}
		\centering
		\includegraphics[width=0.95\linewidth]{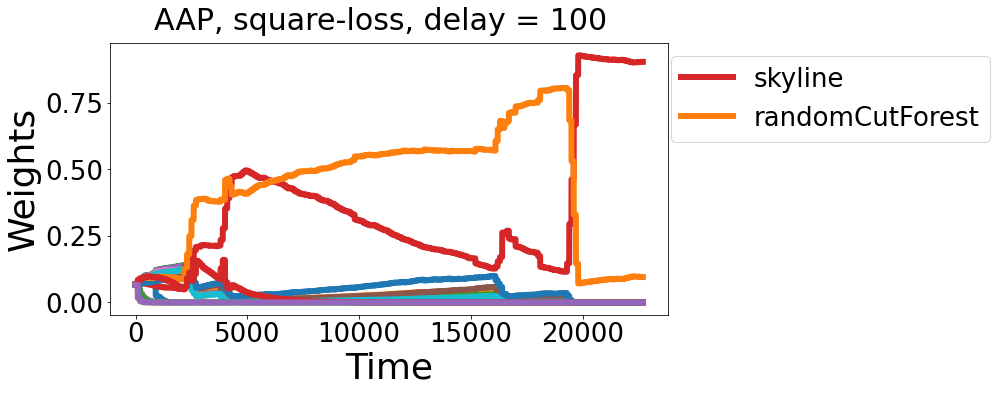} 
		\vspace{4ex}
	\end{subfigure} 
	\begin{subfigure}[b]{0.5\linewidth}
		\centering
		\includegraphics[width=0.95\linewidth]{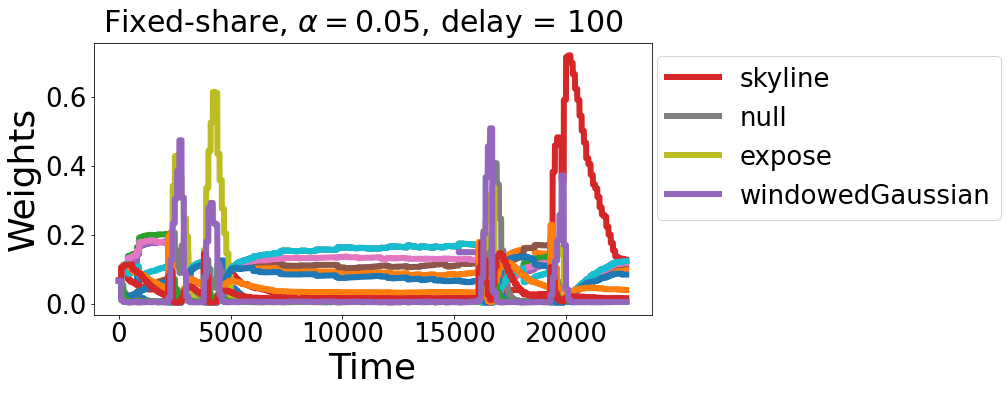} 
		\vspace{4ex}
	\end{subfigure}
	\begin{subfigure}[b]{0.5\linewidth}
		\centering
		\includegraphics[width=0.95\linewidth]{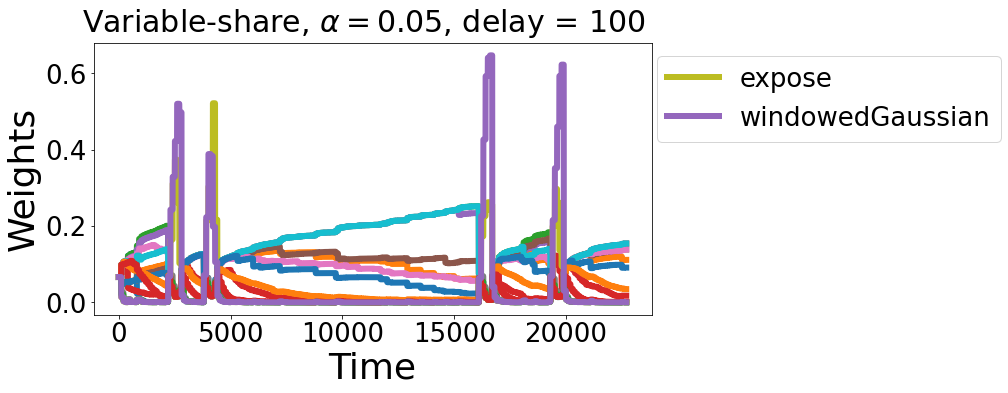} 
		\vspace{4ex}
	\end{subfigure} 
	\begin{subfigure}[b]{0.5\linewidth}
		\centering
		\includegraphics[width=0.95\linewidth]{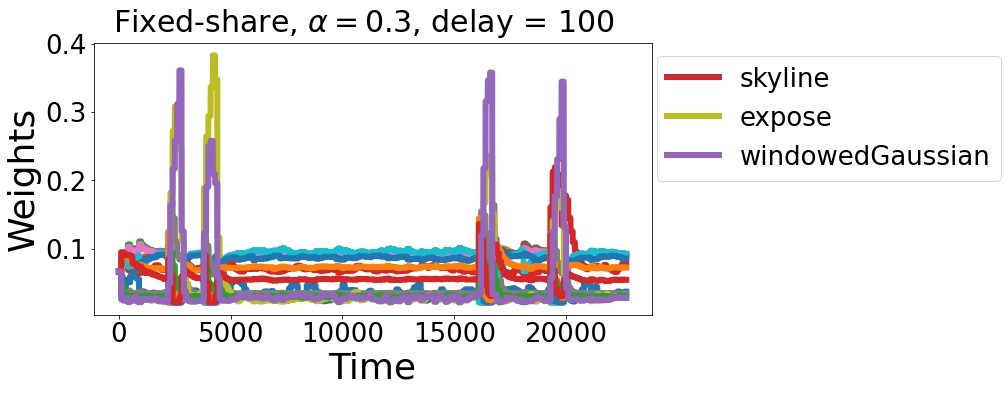} 
	\end{subfigure}
	\begin{subfigure}[b]{0.5\linewidth}
		\centering
		\includegraphics[width=0.95\linewidth]{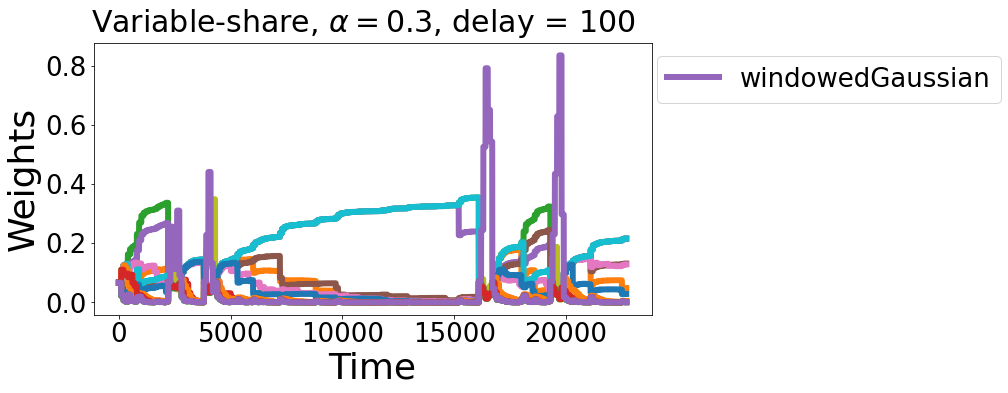} 
	\end{subfigure} 
	\caption{Weights update of AAP, Fixed-share and Variable-share for different $\alpha$}
	\label{fig:weights_update} 
\end{figure}

\subsection{Theoretical bounds}
An important property of the considered algorithms is their theoretical bounds. At any point in the future, we are sure that the total average losses of the algorithms will be close to the best expert's losses in the case of AAP-current, and close to the superexpert's losses in the case of Fixed-share and Variable-share. In the following experiments, we visualise the theoretical bound of the AAP. For this purpose, we picked one time-series of the perfect square wave from the artificial dataset without anomalies. Recall, that AAP-current coincides with AA when delay is equal to one. Figure \ref{fig:loss_diff_fixed0d1} shows the difference between the total logarithmic losses of the expert models and the AA. Therefore, if the loss difference is below zero, the AA performs worse than the expert. For illustration, we pick three models: numenta, earthgeckoSkyline and bayesChangePt. The last two models are taken because they have the lowest total losses on the artificial dataset. The figure also illustrates the theoretical bound of the AA. The meaning of this bound is that for any point in the future, even if the AA performs worse than the best expert, it will not perform much worse. In the case of AA, the theoretical bound is a constant and does not depend on time. From the figure, we can see that the bound is `tight'.  Figure \ref{fig:loss_diff_variable0d1} illustrates the same experiment for square loss. Figure \ref{fig:loss_diff_d50} illustrates the theoretical bound on the cumulative average losses for AAP with delay 50. The loss difference shown in the figure is between the cumulative average losses of the competitors and AAP. Similarly, we have a constant regret in the case of AAP, though the bound is not as `tight' as in the case of `no delay'.

\begin{figure}
	\centering
	\begin{subfigure}{.5\textwidth}
		\centering
		\includegraphics[width=1\linewidth]{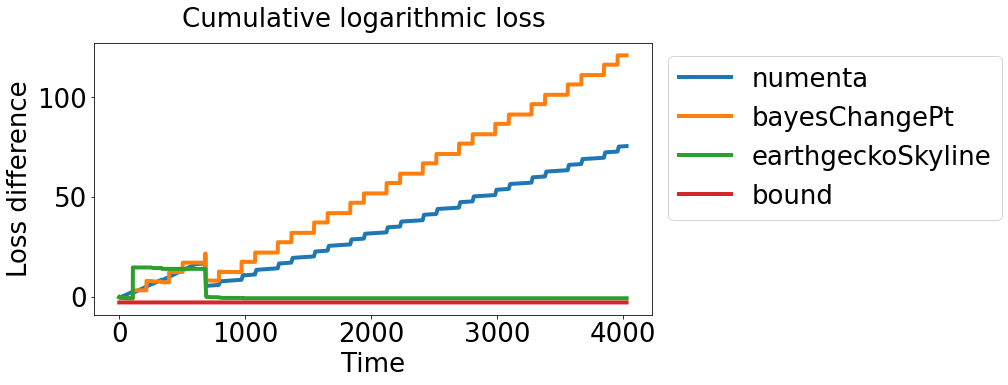}
		\caption{Log-loss}
		\label{fig:loss_diff_fixed0d1}
	\end{subfigure}%
	\begin{subfigure}{.5\textwidth}
		\centering
		\includegraphics[width=1\linewidth]{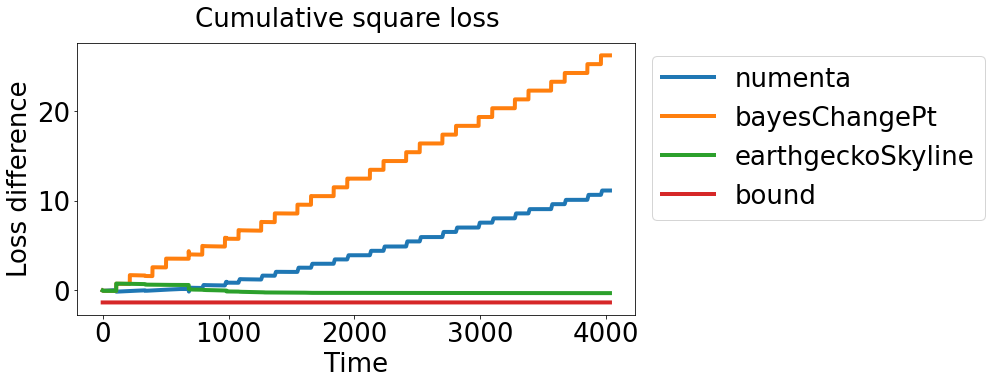}
		\caption{Square-loss}
		\label{fig:loss_diff_variable0d1}
	\end{subfigure}
	\caption{Loss difference between cumulative losses of algorithms and Aggregating Algorithm without delays}
	\label{fig:loss_diff_d1}
\end{figure}

\begin{figure}
	\centering
	\begin{subfigure}{.5\textwidth}
		\centering
		\includegraphics[width=1\linewidth]{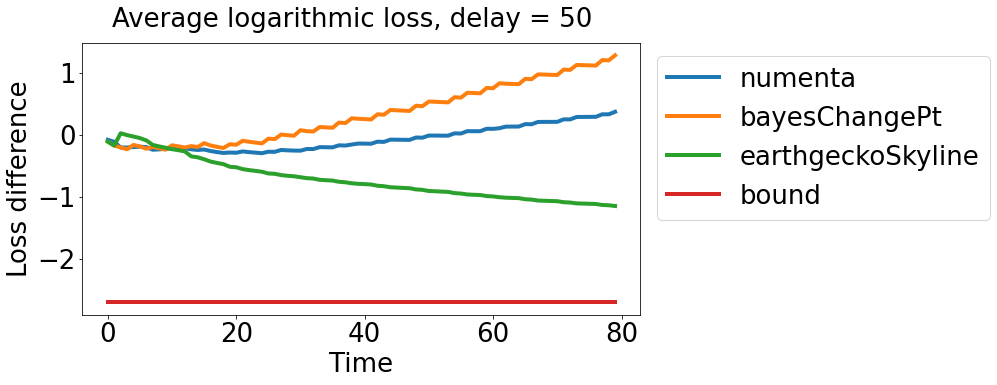}
		\caption{Log-loss}
		\label{fig:loss_diff_fixed0d50}
	\end{subfigure}%
	\begin{subfigure}{.5\textwidth}
		\centering
		\includegraphics[width=1\linewidth]{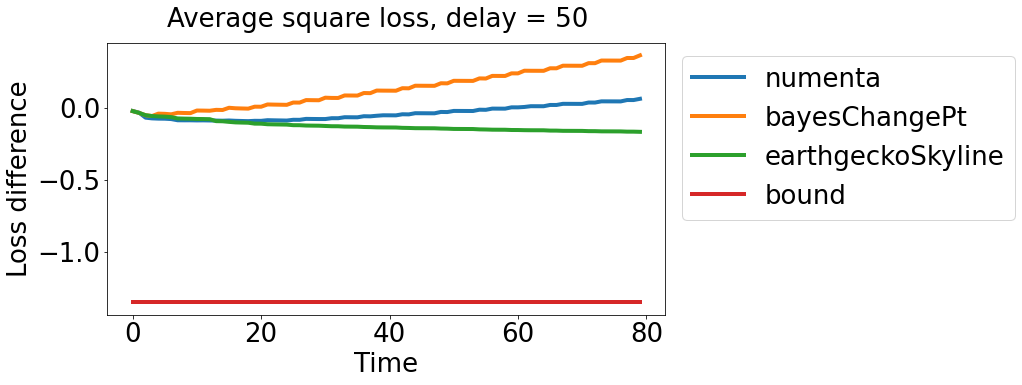}
		\caption{Square-loss}
		\label{fig:loss_diff_variable0d50}
	\end{subfigure}
	\caption{Loss difference between cumulative average losses of algorithms and AAP-current with delay 50}
	\label{fig:loss_diff_d50}
\end{figure}

\section{Conclusions}
In this work, we have proposed an adaptation of the Fixed-share and Variable-share algorithms to delayed feedback. We proved their worst-case upper bounds on the cumulative average losses. We then applied these approaches to the problem of anomaly detection that results in a new method for aggregating unsupervised anomaly detection algorithms and incorporating feedback that might come with delay. Apart from the theoretical guarantees, the approach is highly explainable as the experts' weights reflect their performances.

We have tested the proposed approaches on the NAB benchmark, which provides an open-source environment for testing anomaly detection algorithms on streaming datasets. Experimental results show that the proposed approaches, which combine models, provide significantly better results than any single model available at the NAB repository in terms of several losses and classification metrics. We explore the predictions of the proposed methods for different values of the switching rate $\alpha$. We explain the behaviour of the algorithms by analysing the weights' updates for different $\alpha$. At last, we illustrate the theoretical bounds of the proposed approaches.

\section*{Acknowledgments}
This work was supported, in whole or in part, by the Bill \& Melinda Gates Foundation [INV-001309]. Under the grant conditions of the Foundation, a Creative Commons Attribution 4.0 Generic License has already been assigned to the Author Accepted Manuscript version that might arise from this submission.

\bibliographystyle{unsrt}
\bibliography{mylib}

\begin{thebibliography}{10}

\bibitem{chandola2009survey}
V.~Chandola, A.~Banerjee, and V.~Kumar.
\newblock Anomaly detection: a survey.
\newblock {\em ACM Computing Surveys}, 41:1--72, 2009.
\newblock \url{https://doi.org/10.1145/1541880.1541882}.

\bibitem{chandola2009comparative}
V.~Chandola, V.~Mithal, and V.~Kumar.
\newblock Comparative evaluation of anomaly detection techniques for sequence
  data.
\newblock In {\em Eighth IEEE International Conference on Data Mining}, pages
  743--748, 2008.
\newblock \url{https://doi.org/10.1109/ICDM.2008.151}.

\bibitem{pimentel2014novelty}
M.~A.~F. Pimentel, D.~A. Clifton, L.~Clifton, and L.~Tarassenko.
\newblock A review of novelty detection.
\newblock {\em Signal Process}, 99:215--249, 2014.
\newblock \url{https://doi.org/10.1016/j.sigpro.2013.12.026}.

\bibitem{chalapathy2019survey}
R.~Chalapathy and S.~Chawla.
\newblock Deep learning for anomaly detection: A survey.
\newblock {\em A Preprint}, 2019.
\newblock \url{https://arxiv.org/pdf/1901.03407.pdf}.

\bibitem{habeeb2019survey}
R.~A.~A. Habeeb, F.~Nasaruddin, A.~Gani, I.~Abaker, I.~A.~T. Hashem, E.~Ahmed,
  and M.~Imran.
\newblock Real-time big data processing for anomaly detection: A survey.
\newblock {\em International Journal of Information Management}, 45:289--307,
  2019.
\newblock \url{https://doi.org/10.1016/j.ijinfomgt.2018.08.006}.

\bibitem{vovk_fedor_brier}
V.~Vovk and F.~Zhdanov.
\newblock Prediction with expert advice for the {B}rier game.
\newblock {\em Journal of Machine Learning Research}, 10:2445--2471, 2009.

\bibitem{dzhamtyrova2020quantile}
R.~Dzhamtyrova and Y.~Kalnishkan.
\newblock Competitive online quantile regression.
\newblock In {\em International Conference on Information Processing and
  Management of Uncertainty in Knowledge-Based Systems}, pages 499--512, 2020.

\bibitem{CovOrd96}
{T.~M.} Cover and E.~Ordentlich.
\newblock Universal portfolios with side information.
\newblock {\em IEEE Transactions on Information Theory}, 42(2), 1996.

\bibitem{dzhamtyrova2020var}
R.~Dzhamtyrova and Y.~Kalnishkan.
\newblock Prediction with expert advice for {Value at Risk}.
\newblock In {\em International Joint Conference on Neural Networks (IJCNN)},
  2020.

\bibitem{maj_little_1994}
N.~Littlestone and M.~K. Warmuth.
\newblock The {W}eighted {M}ajority {A}lgorithm.
\newblock {\em Information and Computation}, 108:212--261, 1994.

\bibitem{cbl_book_prediction}
N.~Cesa-Bianchi and G.~Lugosi.
\newblock {\em Prediction, Learning, and Games}.
\newblock Cambridge University Press, 2006.

\bibitem{vovk_aggr}
V.~Vovk.
\newblock Aggregating strategies.
\newblock In {\em Proceedings of the 3rd Annual Workshop on Computational
  Learning Theory}, pages 371--383, San Mateo, CA, 1990. Morgan Kaufmann.

\bibitem{vovk_cols}
V.~Vovk.
\newblock Competitive on-line statistics.
\newblock {\em International Statistical Review}, 69(2):213--248, 2001.

\bibitem{Adamskiy2019Packs}
D.~Adamskiy, A.~Bellotti, R.~Dzhamtyrova, and Y.~Kalnishkan.
\newblock Aggregating algorithm for prediction of packs.
\newblock {\em Machine Learning}, page 108: 1231–1260, 2019.

\bibitem{kalnishkan2018ml_lectures}
Y.~Kalnishkan.
\newblock Lectures in online machine learning.
\newblock {\em Royal Holloway, University of London}, 2018.

\bibitem{warmuth_tracking_original}
M.~Herbster and M.~K. Warmuth.
\newblock Tracking the best expert.
\newblock {\em Machine Learning}, 32:151--178, 1998.

\bibitem{korotin2020adaptive}
A.~Korotin, V.~V'yugin, and E.~Burnaev.
\newblock Adaptive hedging under delayed feedback.
\newblock {\em Neurocomputing}, 397:356--368, 2020.

\bibitem{freund1997hedge}
Y.~Freund and R.~E. Schapire.
\newblock A decision-theoretic generalization of on-line learning and an
  application to boosting.
\newblock {\em Journal of computer and system sciences}, 55:119--139, 1997.

\bibitem{vyugin2018tracking}
V.~V. V'yugin and V.~G. Trunov.
\newblock Tracking the optimal sequence of predictive strategies.
\newblock {\em Journal of Communications Technology and Electronics},
  63:1491--1501, 2018.
\newblock \url{https://doi.org/10.1134/S1064226918120240}.

\bibitem{joulani2013online}
P.~Joulani, A.~Gyorgy, and C.~Szepesv{\'a}ri.
\newblock Online learning under delayed feedback.
\newblock In {\em Proceedings of the 30th International Conference on Machine
  Learning (ICML-13)}, pages 1453--1461, 2013.

\bibitem{quanrud2015online}
K.~Quanrud and D.~Khashabi.
\newblock Online learning with adversarial delays.
\newblock In {\em Advances in Neural Information Processing Systems}, pages
  1270--1278, 2015.

\bibitem{Zhdanov2011PhD}
F.~Zhdanov.
\newblock Theory and applications of competitive prediction.
\newblock {\em PhD thesis}, 2011.

\bibitem{Hunter:2007}
J.~D. Hunter.
\newblock Matplotlib: A 2d graphics environment.
\newblock {\em Computing in Science \& Engineering}, 9(3):90--95, 2007.

\bibitem{loeve_vol1}
M.~Lo{\`e}ve.
\newblock {\em Probability Theory I}.
\newblock Springer, 4th edition, 1977.

\end{thebibliography}
\nocite{Hunter:2007}

\section*{Appendix}

\begin{lemma} \label{lemma:weights_fs1}
	For any outcomes and experts' predictions the following inequality on experts' weights holds for any $t \le t^\prime$
	\begin{equation}\label{eq:weights_fs1}
		\frac{w_{t^\prime}(i)}{\tilde{w}_{t-1}(i)} \ge e^{-\eta  \sum_{r=t}^{t^\prime} \frac{1}{D_r}\sum_{d=1}^{D_r} \lambda(y_{r, d}, \xi_{r, d}(i))} (1-\alpha)^{(t^\prime-t)}.
	\end{equation}
	\begin{proof}
		The proof is similar to the proof of Lemma 2 in \cite{warmuth_tracking_original}. By combining (\ref{eq:fs_update}) and the intermediate weights update (line 9 of Protocol \ref{aap_incremental}) the share weights update on step $t$ is
		\begin{equation*}
			\tilde{w}_t(i) = \frac{\alpha}{N-1} \sum_{j \ne i} w_t(j) + (1-\alpha) e^{-\frac{\eta}{D_t}  \sum_{d=1}^{D_t} \lambda(y_{t, d}, \xi_{t, d}(i))} \tilde{w}_{t-1}(i).
		\end{equation*}
		By dropping the first term on the right-hand side we get the following inequality
		\begin{equation*}
			\tilde{w}_t(i) \ge (1-\alpha) e^{-\frac{\eta}{D_t}  \sum_{d=1}^{D_t} \lambda(y_{t, d}, \xi_{t, d}(i))} \tilde{w}_{t-1}(i).
		\end{equation*}
		The intermediate weights update on trial $t^\prime$ is
		\begin{multline*}
			w_{t^\prime}(i) = \tilde{w}_{t^\prime-1}(i) e^{-\frac{\eta}{D_{t^\prime}}  \sum_{d=1}^{D_{t^\prime}} \lambda(y_{t^\prime, d}, \xi_{t^\prime, d}(i))}\\
			\ge  e^{-\frac{\eta}{D_{t^\prime}}  \sum_{d=1}^{D_{t^\prime}} \lambda(y_{t^\prime, d}, \xi_{t^\prime, d}(i))} \prod_{r=t}^{t^\prime-1}\left[(1-\alpha) e^{-\frac{\eta}{D_r}  \sum_{d=1}^{D_r} \lambda(y_{r, d}, \xi_{r, d}(i))}\right] \tilde{w}_{t-1}(i)\\
			= \tilde{w}_{t-1}(i) e^{-\eta  \sum_{r=t}^{t^\prime} \frac{1}{D_r}\sum_{d=1}^{D_r} \lambda(y_{r, d}, \xi_{r, d}(i))} (1-\alpha)^{(t^\prime-t)}.
		\end{multline*}
	\end{proof}
\end{lemma}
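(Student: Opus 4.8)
The plan is to establish the bound by unfolding the two-stage weight update of Fixed-share---the multiplicative loss update followed by the pooling share update---into a single recursion on the share weights $\tilde{w}_t(i)$, and then telescoping that recursion from step $t$ to step $t^\prime$. The inequality claimed in (\ref{eq:weights_fs1}) is exactly what survives when the cross-expert pooling contributions are discarded.

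First I would substitute the intermediate update from line 9 of Protocol \ref{aap_incremental}, namely $w_t(i) = \tilde{w}_{t-1}(i)\, e^{-\frac{\eta}{D_t}\sum_{d=1}^{D_t}\lambda(y_{t,d},\xi_{t,d}(i))}$, into the Fixed-share share update (\ref{eq:fs_update}). This rewrites $\tilde{w}_t(i)$ as a sum of two terms: the ``own'' contribution $(1-\alpha)\,\tilde{w}_{t-1}(i)\, e^{-\frac{\eta}{D_t}\sum_{d}\lambda(y_{t,d},\xi_{t,d}(i))}$, and the pooled contribution $\frac{\alpha}{N-1}\sum_{j\ne i} w_t(j)$ received from the other experts.

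The crucial observation is that all weights remain nonnegative throughout the protocol, since they are built from the nonnegative initial weights $1/N$ by multiplication by positive exponentials and by the nonnegative pooling step. Hence the pooled term may be dropped to obtain the one-step lower bound $\tilde{w}_t(i) \ge (1-\alpha)\, e^{-\frac{\eta}{D_t}\sum_{d}\lambda(y_{t,d},\xi_{t,d}(i))}\,\tilde{w}_{t-1}(i)$. I would then iterate this inequality for $r = t, t+1, \ldots, t^\prime-1$ to bound $\tilde{w}_{t^\prime-1}(i)$ from below by $\tilde{w}_{t-1}(i)$ times the product $\prod_{r=t}^{t^\prime-1}(1-\alpha)\, e^{-\frac{\eta}{D_r}\sum_{d}\lambda(y_{r,d},\xi_{r,d}(i))}$, and finally apply the intermediate update once more at step $t^\prime$ to pass from $\tilde{w}_{t^\prime-1}(i)$ to $w_{t^\prime}(i)$.

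The remaining work is purely bookkeeping: collecting the $t^\prime-t$ factors of $(1-\alpha)$ into $(1-\alpha)^{t^\prime-t}$, and merging the loss factor at step $t^\prime$ with the product over $r=t,\ldots,t^\prime-1$ so that the combined exponent becomes $-\eta\sum_{r=t}^{t^\prime}\frac{1}{D_r}\sum_{d=1}^{D_r}\lambda(y_{r,d},\xi_{r,d}(i))$. Dividing by $\tilde{w}_{t-1}(i)$ then yields (\ref{eq:weights_fs1}). I do not expect any genuine obstacle here; the only points requiring care are the nonnegativity justification that licenses dropping the pool, and the boundary case $t=t^\prime$, where the product is empty, $(1-\alpha)^{0}=1$, and the claim reduces directly to the definition in line 9.
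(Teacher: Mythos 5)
Your proposal is correct and follows essentially the same route as the paper's proof: substitute the intermediate update into the Fixed-share pooling step, drop the nonnegative pooled term to get the one-step lower bound $\tilde{w}_t(i) \ge (1-\alpha)\, e^{-\frac{\eta}{D_t}\sum_{d}\lambda(y_{t,d},\xi_{t,d}(i))}\,\tilde{w}_{t-1}(i)$, iterate it up to $t^\prime-1$, and finish with one application of the intermediate update at $t^\prime$. Your explicit remarks on weight nonnegativity and the empty-product case $t=t^\prime$ are details the paper leaves implicit, but they do not change the argument.
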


\begin{lemma} [Lemma 3 in \cite{warmuth_tracking_original}] \label{lemma:weights_fs2}
	For any outcomes and experts' predictions the following inequality on experts' weights holds on trial $t$
	\begin{equation}\label{eq:weights_fs2}
		\frac{\tilde{w}_t(i)}{w_t(j)} \ge \frac{\alpha}{N-1}, ~i \ne j.
	\end{equation}
	\begin{proof}
		We provide the proof from Lemma 3 in \cite{warmuth_tracking_original} for completeness. The share weight update (\ref{eq:fs_update}) on step $t$ is 
		\begin{equation*}
			\tilde{w}_t(i) = \frac{\alpha}{N-1} \sum_{j \ne i} w_t(j) + (1-\alpha) w_t(i).
		\end{equation*}
		By keeping only one term from the sum and dropping the second term we get the inequality (\ref{eq:weights_fs2}).
	\end{proof}
\end{lemma}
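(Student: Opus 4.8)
The plan is to derive (\ref{eq:weights_fs2}) directly from the Fixed-share share-update formula (\ref{eq:fs_update}), treating it as a purely algebraic inequality between non-negative quantities rather than anything requiring mixability or loss bounds. First I would write out the update for a fixed index $i$,
\[
	\tilde{w}_t(i) = (1-\alpha)\, w_t(i) + \frac{\alpha}{N-1} \sum_{k \ne i} w_t(k),
\]
and record that every intermediate weight $w_t(\cdot)$ is strictly positive: by line 9 of Protocol \ref{aap_incremental} it is the product of $\tilde{w}_{t-1}(\cdot)$ (which traces back through the share updates to the strictly positive initialisation $\tilde{w}_0 = 1/N$) with a positive exponential factor, and the combination in (\ref{eq:fs_update}) preserves positivity. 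In particular all weights are non-negative and $w_t(j) > 0$, so dividing by $w_t(j)$ at the end is legitimate and preserves the direction of the inequality. I would also note $N \ge 2$, so that the sum over $k \ne i$ is non-empty and the denominator $N-1$ is positive.

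The core step is simply to discard non-negative terms. Assuming $0 \le \alpha \le 1$, the summand $(1-\alpha)\,w_t(i)$ is non-negative and may be dropped, giving $\tilde{w}_t(i) \ge \frac{\alpha}{N-1} \sum_{k \ne i} w_t(k)$. Here is where the hypothesis $i \ne j$ enters: since $j \ne i$, the index $j$ is one of the summation indices $k \ne i$, so the term $w_t(j)$ genuinely appears in the sum. Keeping only that single term and discarding the remaining non-negative contributions yields $\tilde{w}_t(i) \ge \frac{\alpha}{N-1} w_t(j)$, and dividing through by $w_t(j) > 0$ produces exactly (\ref{eq:weights_fs2}).

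There is essentially no analytical obstacle; the only care required is bookkeeping. I would make sure to invoke $i \ne j$ precisely at the moment of extracting $w_t(j)$ from the sum over $k \ne i$ (this is exactly why the statement excludes $i = j$), and to confirm that each dropped term is non-negative so that both inequalities point the correct way. The inequality is deliberately this crude: it lower-bounds the weight mass that Fixed-share guarantees to any base expert after a share step, and it is this switch-point lower bound that later feeds into the regret analysis of Theorem \ref{theorem:bound_fs}, where the weight of a newly selected expert at a switch must be controlled in terms of the weight carried by the previously followed expert.
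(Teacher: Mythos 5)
Your proof is correct and follows essentially the same route as the paper's: write out the Fixed-share update (\ref{eq:fs_update}), drop the $(1-\alpha)\,w_t(i)$ term, keep only the $w_t(j)$ term from the sum (which is present precisely because $i \ne j$), and divide by $w_t(j)$. Your additional remarks on strict positivity of the weights and $N \ge 2$ are sound housekeeping that the paper leaves implicit.
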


{\bf First proof of Theorem \ref{theorem:bound_fs}}
\begin{proof}
	For the proof we use Lemma \ref{lemma:AAP_bound} and follow lectures notes in \cite{kalnishkan2018ml_lectures}. Recall that superexpert $\mathcal{S}_{T, N, k, \bf{t}, \bf{e}}$ can switch between base experts $\mathcal{E}_i$ and $\mathcal{E}_j$ with probability $\alpha$. Let the initial distribution of the base experts be uniform $w_0(i) = 1/N$. The probability of switching between base experts $\mathcal{E}_i$ and $\mathcal{E}_j$ is:
	\begin{equation*}
		p_{ij} = \begin{cases}
			1-\alpha,\text{ if $i=j$}\\
			\frac{\alpha}{N-1},\text{ otherwise}
		\end{cases}.
	\end{equation*}
	
	Then the initial probability of the superexpert $\mathcal{S}_{T, N, k, \bf{t}, \bf{e}}$ with $k$ switches is:
	\begin{equation}
		w_0(\mathcal{S}_{T, N, k, \bf{t}, \bf{e}}) = \frac{1}{N} \left( \frac{\alpha}{N-1} \right)^k (1-\alpha)^{T-1-k}.
	\end{equation}
	When we substitute the base expert $\mathcal{E}_i$ with the superexpert $\mathcal{S}_{T, N, k, \bf{t}, \bf{e}}$ and the initial weights distribution $w_0(i)$ with $w_0(\mathcal{S}_{T, N, k, \bf{t}, \bf{e}})$ in (\ref{eq:AAP_bound}) we have
	\begin{multline*}
		L_T^{\textrm{average}} \le C L_T^{\textrm{average}}(\mathcal{S}_{T, N, k, \bf{t}, \bf{e}}) + \frac{C}{\eta} \ln \frac{1}{w_0(\mathcal{S}_{T, N, k, \bf{t}, \bf{e}})} \\
		\le CL_T^{\textrm{average}} (\mathcal{S}_{T, N, k, \bf{t}, \bf{e}}) + \frac{C}{\eta} \left( \ln N + k \ln \frac{N-1}{\alpha}
		+ (T - 1 - k) \ln \frac{1}{1 - \alpha} \right).
	\end{multline*}
\end{proof}

{\bf Second proof of Theorem \ref{theorem:bound_fs}}
\begin{proof}
	The step on line~6 of Protocol~\ref{aap_incremental} states that at step $t$
	\begin{equation*}
		e^{-\eta \lambda(y_{t,d}, \gamma_{t,d})/C}\ge
		\sum_{i=1}^N \tilde{w}_{t-1}^*(i)e^{-\eta \lambda(y_{t,d}, \xi_{t,d}(i))}, \forall d=1,2,\dots,D_t
	\end{equation*}
	By multiplying these equations $D_t$ times we have
	\begin{equation*}
		e^{-\eta \sum_{d=1}^{D_t}\lambda(y_{t,d}, \gamma_{t,d})/C}\ge \prod_{d=1}^{D_t}
		\sum_{i=1}^N \tilde{w}_{t-1}^*(i)e^{-\eta \lambda(y_{t,d}, \xi_{t,d}(i))}.
	\end{equation*}
	
	We apply the generalised H\"older inequality to the right-hand side of the inequality which states that on measure spaces $(S, \Sigma, \mu)$, formed by the space $S$, the $\sigma$-field $\Sigma$ and the measure $\mu$ defined on this $\sigma$-field,  for all measurable real- or complex-valued functions $f_1, \dots, f_{D_t}$ defined on $S$: $\|\prod_{d=1}^{D_t} f_d\|_r\le
	\prod_{d=1}^{D_t}\|f_d\|_{r_d}$, where $\sum_{d=1}^{D_t} 1/r_d=1/r$. This
	follows by induction from the version of the inequality given by
	\cite[Section~9.3]{loeve_vol1}.
	
	By defining a function
	$f_d = (e^{-\eta \lambda(y_{t,d}, \xi_{t,d}(1)}, \dots, e^{-\eta \lambda(y_{t,d}, \xi_{t,d}(N))})\in\mathbb{R}^N$, introducing a
	measure $\tilde{w}_{t-1}^*(i)$, $i=1,2,\ldots,N$, and letting $r_d=1$ and $r = 1/D_t$ we have
	\begin{equation*}
		e^{-\eta \sum_{d=1}^{D_t}\lambda(y_{t,d}, \gamma_{t,d})/C}\ge \prod_{d=1}^{D_t}
		\sum_{i=1}^N \tilde{w}_{t-1}^*(i) e^{-\eta \lambda(y_{t,d}, \xi_{t,d}(i))}
		\ge
		\left(\sum_{i=1}^N \tilde{w}_{t-1}^*(i) e^{-\eta \sum_{d=1}^{D_t} \lambda(y_{t,d}, \xi_{t,d}(i))/D_t}\right)^{D_t}.
	\end{equation*}
	Raising the resulting inequality to the power $1/D_t$, taking the logarithm and multiplying by $C / \eta$ we have
	\begin{multline*} 
		\frac{1}{D_t}\sum_{d=1}^{D_t}\lambda(y_{t,d}, \gamma_{t,d}) \le - \frac{C}{\eta} \ln \sum_{i=1}^N \tilde{w}_{t-1}^*(i) e^{-\eta \sum_{d=1}^{D_t} \lambda(y_{t,d}, \xi_{t,d}(i))/D_t} \\
		= -\frac{C}{\eta} \ln \frac{\sum_{i=1}^N \tilde{w}_{t-1}(i)e^{-\eta\sum_{d=1}^{D_t}\lambda(y_{t,d}, \xi_{t,d}(i))/D_t}}{\sum_{i=1}^N \tilde{w}_{t-1}(i)} 
		= -\frac{C}{\eta} \ln \frac{\sum_{i=1}^N w_t(i)}{\sum_{i=1}^N \tilde{w}_{t-1}(i)}.
	\end{multline*}
	
	Let us denote $W_t:=  \sum_{i=1}^N \tilde{w}_t(i)$. From the share weights update (\ref{eq:fs_update}) we can see that the total sum of the share weights update is equal to the total sum of the intermediate weights at time $t$
	\begin{equation*}
		\sum_{i=1}^N \tilde{w}_t(i) = (1-\alpha) \sum_{i=1}^N w_t(i) + \frac{\alpha}{N-1} \left(N \sum_{j=1}^N w_t(j) - \sum_{i=1}^N w_t(i) \right) = \sum_{i=1}^N w_t(i). 
	\end{equation*}
	Then
	\begin{multline} \label{AA_prop_delays}
		L_T^{\mathrm{average}}:=\sum_{t=1}^T \frac{1}{D_t}\sum_{d=1}^{D_t}\lambda(y_{t,d}, \gamma_{t,d})\le  -\frac{C}{\eta} \sum_{t=1}^T 
		\ln \frac{W_t}{W_{t-1}} =
		-\frac{C}{\eta} \ln \prod_{t=1}^T \frac{W_t}{W_{t-1}} 
		=  -\frac{C}{\eta} \ln W_T \le -\frac{C}{\eta} \ln \tilde{w}_T(i).
	\end{multline}
	
	Recall that superexpert $\mathcal{S}_{T, N, k, \bf{t}, \bf{e}}$ follow predictions of experts $(e_0, e_1,\dots,e_k)$ on intervals $[t_0, t_1), [t_1, t_2),\dots,[t_k, t_{k+1})$ respectively. On trial $T$ we can express the share weight as the following telescoping product
	\begin{equation*}
		\tilde{w}_T(e_k) = \tilde{w}_{t_0-1}(e_0) \frac{w_{t_1-1}(e_0)}{\tilde{w}_{t_0-1}(e_0)} \prod_{i=1}^k \left( \frac{\tilde{w}_{t_i-1}(e_i)}{w_{t_i-1}(e_{i-1})} \frac{w_{t_{i+1}-1}(e_i)}{\tilde{w}_{t_i-1}(e_i)} \right)
		\frac{\tilde{w}_T(e_k)}{w_{t_{k+1}-1}(e_k)}.
	\end{equation*}
	By applying Lemma \ref{lemma:weights_fs1} and Lemma \ref{lemma:weights_fs2} we get 
	\begin{multline*}
		\tilde{w}_T(e_k) \ge \tilde{w}_{t_0-1}(e_0) \left(\frac{\alpha}{N-1} \right)^k 
		\prod_{i=0}^k \left[ e^{-\eta \sum_{r=t_i}^{t_{i+1}-1} \frac{1}{D_r} \sum_{d=1}^{D_r}\lambda(y_{r,d}, \xi_{r,d}(e_i))} (1-\alpha)^{(t_{i+1}-t_i-1)} \right]
		\frac{\tilde{w}_T(e_k)}{w_{t_{k+1}-1}(e_k)}.
	\end{multline*}
	By definition,  the cumulative average loss of superexpert $\mathcal{S}_{T, N, k, \bf{t}, \bf{e}}$ is $\sum_{i=0}^k \sum_{r=t_i}^{t_{i+1}-1}\frac{1}{D_r} \sum_{d=1}^{D_r} \lambda(y_{r,d}, \xi_{r,d}(e_i))$, and $ \frac{\tilde{w}_T(e_k)}{w_{t_{k+1}-1}(e_k)}$ is equal to one, and \\$\sum_{i=0}^k \left(t_{i+1} - t_i - 1 \right) = t_{k+1} - t_0 - k = T - k - 1$. Then we have
	\begin{equation*}
		\tilde{w}_T(e_k) \ge \frac{1}{N} e^{-\eta L_T^{\mathrm{average}}(\mathcal{S}_{T, N, k, \bf{t}, \bf{e}})} (1-\alpha)^{T-k-1} \left(\frac{\alpha}{N-1} \right)^k.
	\end{equation*}
	The proof is completed by putting this expression in (\ref{AA_prop_delays}).
\end{proof}

\begin{lemma}[Lemma 4 in \cite{warmuth_tracking_original}] \label{lemma:lemma4_herbster}
	If $\beta > 0$ and $r \in [0, 1]$, then $\beta^r \le 1 - (1-\beta)r$ and $1-(1-\beta)^r \ge \beta r$. 
\end{lemma}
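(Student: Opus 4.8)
The plan is to derive both inequalities from a single observation: for a fixed base $b > 0$, the map $r \mapsto b^r = e^{r \ln b}$ is convex on $\mathbb{R}$, being the composition of the convex exponential with a linear function. Restricting to $[0,1]$ and using the secant-line upper bound that convexity supplies at the endpoints $r=0$ and $r=1$, one gets, for every $r \in [0,1]$,
\begin{equation*}
	b^r = b^{\,r\cdot 1 + (1-r)\cdot 0} \le r\, b^1 + (1-r)\, b^0 = 1 - (1-b)\, r.
\end{equation*}

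First I would apply this with $b = \beta$, which is legitimate for any $\beta > 0$, to obtain $\beta^r \le 1 - (1-\beta)r$ — the first claimed inequality, verbatim.

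For the second inequality I would re-use the same bound with base $b = 1-\beta$. This is permitted whenever $1-\beta > 0$, i.e. $\beta < 1$ (the regime in which $(1-\beta)^r$ is the relevant quantity in the Variable-share weight update, where $\beta$ plays the role of the switching rate). Substituting gives $(1-\beta)^r \le 1 - \big(1 - (1-\beta)\big) r = 1 - \beta r$, and rearranging yields $1 - (1-\beta)^r \ge \beta r$, as required.

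I expect no genuine obstacle here: the entire content is the convexity of the exponential together with the secant-line inequality, and the second statement is just the first one under the substitution $\beta \mapsto 1-\beta$. The only point deserving a word of care is the admissible range of the base — the first inequality holds for all $\beta > 0$, whereas the substitution used for the second tacitly requires $\beta \in (0,1)$ so that the base $1-\beta$ is positive and $(1-\beta)^r$ is well-defined, which matches the lemma's intended use. If one wished to avoid the substitution, an equivalent route is to set $h(r) = 1 - (1-\beta)^r - \beta r$, verify $h(0) = h(1) = 0$, and note that $h$ is concave on $[0,1]$, so $h \ge 0$ throughout; but the convexity/secant argument is cleaner and disposes of both parts at once.
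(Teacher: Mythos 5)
Your proof is correct: the secant-line bound from convexity of $r \mapsto \beta^r$ gives the first inequality, and the substitution $\beta \mapsto 1-\beta$ gives the second, with the appropriate caveat that this step needs $\beta \le 1$ (which holds in the lemma's intended use, where $\beta$ is the switching rate $\alpha \in (0,1)$). The paper itself states this lemma without proof, citing Herbster and Warmuth, and your argument is precisely the standard one used there, so there is nothing to add.
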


\begin{lemma}[Lemma 5 in \cite{warmuth_tracking_original}] \label{lemma:lemma5_herbster}
	Given $b, c \in [0, 1], d \in (0, 1]$ and $c+d \ge 1$, then $b^c(c+db^d) \ge b$.
\end{lemma}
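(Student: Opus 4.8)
The plan is to prove the equivalent inequality obtained by dividing by $b^c$, namely $c + d b^d \ge b^{1-c}$, and then to linearise the power on the right-hand side using the elementary estimate of Lemma~\ref{lemma:lemma4_herbster}. First I would dispose of the degenerate case $b=0$ by hand: if $c>0$ then $b^c=0$, while if $c=0$ the factor $c+db^d = d\cdot 0^d$ still vanishes for $d>0$, so in both subcases $b^c(c+db^d)=0=b$. For $b\in(0,1]$ the claim $b^c(c+db^d)\ge b$ is equivalent to $c + d b^d \ge b^{1-c}$, and I would record that the hypothesis $c+d\ge 1$ is exactly the statement $1-c\le d$, with $1-c\in[0,1]$ because $c\in[0,1]$.

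Next I would apply Lemma~\ref{lemma:lemma4_herbster} with $\beta=b$ and $r=1-c$, which is admissible precisely because $1-c\in[0,1]$, to get $b^{1-c}\le 1-(1-b)(1-c)$. Expanding the right-hand side yields $1-(1-b)(1-c)=c+b(1-c)$, so it suffices to establish $c + d b^d \ge c + b(1-c)$, that is $d b^d \ge b(1-c)$.

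To close, I would chain two elementary monotonicity facts. From $1-c\le d$ and $b\ge 0$ we obtain $b(1-c)\le bd$, and from $d\le 1$ together with $b\le 1$ we obtain $b^d\ge b^1=b$, hence $d b^d\ge db$. Combining gives $d b^d \ge db \ge b(1-c)$, which is the required inequality; tracing back through the equivalences finishes the proof.

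The one genuinely non-routine step, and the place I expect the real work to sit, is the choice of linearisation: applying Lemma~\ref{lemma:lemma4_herbster} to $b^{1-c}$ (rather than, say, to $b^d$) is what turns the transcendental inequality into something that the two crude bounds $b(1-c)\le bd$ and $b^d\ge b$ can settle. The rest is bookkeeping of where each hypothesis enters, namely $c+d\ge 1$ as $1-c\le d$, the range $c\le 1$ as legitimacy of $r=1-c$, and $d\le 1$ as $b^d\ge b$, together with the separate treatment of the $b=0$ (and $c=0$) endpoints forced by the division by $b^c$.
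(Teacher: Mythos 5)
Your proof is correct. There is nothing internal to compare it against: the paper states this lemma with the bracketed attribution to Lemma~5 of \cite{warmuth_tracking_original} and imports it without proof, so your argument fills a gap the paper delegates to the citation. The structure checks out at every step: after disposing of $b=0$ (needed because you divide by $b^c$), the reduction to $c+db^d\ge b^{1-c}$ is legitimate for $b\in(0,1]$; the hypothesis $c+d\ge 1$ is exactly $1-c\le d$, and since $c\in[0,1]$ the exponent $r=1-c$ lies in $[0,1]$, so Lemma~\ref{lemma:lemma4_herbster} applies with $\beta=b$ and gives $b^{1-c}\le 1-(1-b)(1-c)=c+b(1-c)$; and the remaining inequality $db^d\ge b(1-c)$ follows from $b^d\ge b$ (valid since $0<b\le 1$ and $d\le 1$) together with $bd\ge b(1-c)$. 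Tracing back, $b^c(c+db^d)\ge b^c\,b^{1-c}=b$. Two minor remarks: reusing the paper's own Lemma~\ref{lemma:lemma4_herbster} is a nice touch that keeps the appendix self-contained, arguably an improvement on the bare citation; and in the only place the lemma is invoked (the proof of Lemma~\ref{lemma:lemma8_herster}) one has $b=e^{-\eta}>0$, so your $b=0$ endpoint, while correctly handled under the convention $0^0=1$, is never actually exercised.
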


\begin{lemma} \label{lemma:lemma6_herbster}
	\begin{equation}\label{eq:lemma6}
		\tilde{w}_t(i) \ge \begin{cases}
			\tilde{w}_{t-1}(i) \left[e^{-\eta} (1-\alpha) \right]^{\frac{1}{D_t} \sum_{d=1}^{D_t} \lambda(y_{t,d}, \xi_{t,d}(i))} \text{\hspace{3.9cm} \bf{(a)}}\\
			\tilde{w}_{t-1}(j) e^{-\frac{\eta}{D_t} \sum_{d=1}^{D_t} \lambda(y_{t,d}, \xi_{t,d}(j))}  \frac{\alpha}{(N-1)D_t} \sum_{d=1}^{D_t} \lambda(y_{t,d}, \xi_{t,d}(j)), ~j \ne i \text{\hspace{0.1cm}  \bf{(b)}}
		\end{cases}
	\end{equation}
\end{lemma}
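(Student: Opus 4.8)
The plan is to derive both inequalities directly from the Variable-share weight update (\ref{eq:vs_update}) together with the intermediate update on line~9 of Protocol~\ref{aap_incremental}, by discarding nonnegative summands. Throughout, write $\ell_t(i):=\frac{1}{D_t}\sum_{d=1}^{D_t}\lambda(y_{t,d},\xi_{t,d}(i))$ for the average loss of expert $\mathcal{E}_i$ at step $t$; by the boundedness hypothesis of Theorem~\ref{theorem:bound_vs} each $\ell_t(i)\in[0,1]$. Substituting $w_t(i)=\tilde{w}_{t-1}(i)e^{-\eta\ell_t(i)}$ into (\ref{eq:vs_update}) expresses $\tilde{w}_t(i)$ as a sum of a self-term $(1-\alpha)^{\ell_t(i)}\tilde{w}_{t-1}(i)e^{-\eta\ell_t(i)}$ and $N-1$ cross-terms, one for each $j\ne i$. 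Since $\alpha\in[0,1)$ and $\ell_t(j)\ge 0$ we have $(1-\alpha)^{\ell_t(j)}\le 1$, so every summand is nonnegative; this is the only positivity check the argument needs.

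For part~\textbf{(a)} I would keep only the self-term and drop the $N-1$ cross-terms. This gives $\tilde{w}_t(i)\ge(1-\alpha)^{\ell_t(i)}\tilde{w}_{t-1}(i)e^{-\eta\ell_t(i)}$, and collecting the two factors raised to the power $\ell_t(i)$ as $\bigl[(1-\alpha)e^{-\eta}\bigr]^{\ell_t(i)}$ yields exactly the claimed bound.

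For part~\textbf{(b)}, fix $j\ne i$ and keep only the single cross-term for that $j$, dropping the self-term and the remaining $N-2$ cross-terms. This gives $\tilde{w}_t(i)\ge\frac{1-(1-\alpha)^{\ell_t(j)}}{N-1}\,\tilde{w}_{t-1}(j)e^{-\eta\ell_t(j)}$. I would then invoke Lemma~\ref{lemma:lemma4_herbster} with $\beta=\alpha$ and $r=\ell_t(j)\in[0,1]$, whose second inequality reads $1-(1-\alpha)^{\ell_t(j)}\ge\alpha\,\ell_t(j)$, and substitute it for the numerator to obtain the stated bound.

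There is essentially no serious obstacle here: the lemma is the delayed-feedback analogue of the per-trial weight bounds used in \cite{warmuth_tracking_original}, and the only places requiring care are (i) confirming the exponents $\ell_t(j)$ lie in $[0,1]$ so that Lemma~\ref{lemma:lemma4_herbster} applies and the discarded terms are genuinely nonnegative, and (ii) correctly merging the base $(1-\alpha)$ and $e^{-\eta}$ into a single power in part~\textbf{(a)}. These two bounds will then feed into the telescoping-product argument for Theorem~\ref{theorem:bound_vs}, playing the roles that Lemmas~\ref{lemma:weights_fs1} and~\ref{lemma:weights_fs2} play for Fixed-share.
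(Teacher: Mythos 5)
Your proof is correct and follows essentially the same route as the paper: expand the Variable-share update into the self-term plus cross-terms, drop the nonnegative remainder to get each case, and apply Lemma~\ref{lemma:lemma4_herbster} to lower-bound $1-(1-\alpha)^{\ell_t(j)}$ by $\alpha\,\ell_t(j)$ for part (b). Your explicit checks that the average losses lie in $[0,1]$ and that the discarded summands are nonnegative are left implicit in the paper but are exactly the right points to verify.
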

\begin{proof}
	The proof is similar to the proof of Lemma 6 from \cite{warmuth_tracking_original}. By combining (\ref{eq:vs_update}) and line 9 from Protocol \ref{aap_incremental} we have
	\begin{multline} \label{eq:share_weight_variable}
		\tilde{w}_t(i) =
		\tilde{w}_{t-1}(i) e^{-\frac{\eta}{D_t} \sum_{d=1}^{D_t} \lambda(y_{t,d}, \xi_{t,d}(i))} 
		(1-\alpha)^{\frac{1}{D_t} \sum_{d=1}^{D_t} \lambda(y_{t,d}, \xi_{t,d}(i))} \\
		+ \frac{1}{N-1}\sum_{j \ne i} \tilde{w}_{t-1}(j) e^{-\frac{\eta}{D_t} \sum_{d=1}^{D_t} \lambda(y_{t,d}, \xi_{t,d}(j))} \left[1 - (1-\alpha)^{\frac{1}{D_t} \sum_{d=1}^{D_t} \lambda(y_{t,d}, \xi_{t,d}(j))} \right].
	\end{multline}
	Eq. (\ref{eq:lemma6}(a)) is obtained by dropping the second part of (\ref{eq:share_weight_variable}).  By Lemma \ref{lemma:lemma4_herbster},  we have that
	\begin{equation*}
		1 - (1-\alpha)^{\frac{1}{D_t} \sum_{d=1}^{D_t} \lambda(y_{t,d}, \xi_{t,d}(j))} \ge \frac{\alpha}{D_t} \sum_{d=1}^{D_t} \lambda(y_{t,d}, \xi_{t,d}(j)).
	\end{equation*}
	We obtain (\ref{eq:lemma6}(b)) by putting this expression in (\ref{eq:share_weight_variable}) and dropping all the terms apart from one in the second part of the equation.
\end{proof}
\begin{lemma}\label{lemma:lemma7_herbster}
	The share weight of expert $i$ from the end of trial $t$ to the end of trial $t^\prime$ is reduced by no more than a factor $\left[e^{-\eta} (1-\alpha) \right]^{{\sum_{r=t+1}^{t^\prime}}\frac{1}{D_r} \sum_{d=1}^{D_r} \lambda(y_{r,d}, \xi_{r,d}(i))}$, i.e.
	\begin{equation*}
		\frac{\tilde{w}_{t^\prime}(i)}{\tilde{w}_t(i)} \ge \left[e^{-\eta} (1-\alpha) \right]^{{\sum_{r=t+1}^{t^\prime}}\frac{1}{D_r} \sum_{d=1}^{D_r} \lambda(y_{r,d}, \xi_{r,d}(i))}.
	\end{equation*}
	\begin{proof}
		The proof is by applying (\ref{eq:lemma6}(a)) iteratively on trials $t+1, \dots t^\prime$.
	\end{proof}
\end{lemma}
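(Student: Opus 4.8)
The plan is to recognize that Lemma \ref{lemma:lemma7_herbster} is simply the multiplicative accumulation of the single-step bound already established in Lemma \ref{lemma:lemma6_herbster}(a). That part gives, for every trial $r$, the one-step inequality $\tilde{w}_r(i) \ge \tilde{w}_{r-1}(i) \left[e^{-\eta}(1-\alpha)\right]^{\frac{1}{D_r}\sum_{d=1}^{D_r}\lambda(y_{r,d}, \xi_{r,d}(i))}$, which is precisely a lower bound on the one-trial ratio $\tilde{w}_r(i)/\tilde{w}_{r-1}(i)$. Accumulating the loss contributions across a block of trials is therefore just a matter of chaining these per-trial ratio bounds, exactly as Lemma \ref{lemma:weights_fs1} does for Fixed-share.

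First I would fix the expert $i$ and write the target ratio as a telescoping product
\begin{equation*}
\frac{\tilde{w}_{t^\prime}(i)}{\tilde{w}_t(i)} = \prod_{r=t+1}^{t^\prime} \frac{\tilde{w}_r(i)}{\tilde{w}_{r-1}(i)}.
\end{equation*}
Next, for each factor I would invoke Lemma \ref{lemma:lemma6_herbster}(a) at trial $r$ to get $\frac{\tilde{w}_r(i)}{\tilde{w}_{r-1}(i)} \ge \left[e^{-\eta}(1-\alpha)\right]^{\frac{1}{D_r}\sum_{d=1}^{D_r}\lambda(y_{r,d}, \xi_{r,d}(i))}$. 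Finally, I would multiply these inequalities over $r = t+1, \dots, t^\prime$ and collapse the product of powers of the common base $e^{-\eta}(1-\alpha)$ into a single power whose exponent is the sum $\sum_{r=t+1}^{t^\prime}\frac{1}{D_r}\sum_{d=1}^{D_r}\lambda(y_{r,d}, \xi_{r,d}(i))$, which is exactly the claimed bound. Equivalently, a one-line induction on $t^\prime - t$ — trivial base case $t^\prime = t$, inductive step via a single application of part (a) — makes the chaining fully rigorous.

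The only point requiring care, and the nearest thing to an obstacle, is justifying that multiplying the per-trial inequalities preserves the direction of the inequality and that the exponents genuinely add. This is valid because every quantity in sight is strictly positive: the share weights $\tilde{w}_r(i)$ stay positive throughout (they are initialised to $1/N > 0$, and each update of Protocol \ref{aap_incremental} multiplies by a positive exponential factor and then adds nonnegative pooled mass), while the base $e^{-\eta}(1-\alpha)$ is positive for finite $\eta$ and $\alpha \in [0,1)$. Hence each ratio $\tilde{w}_r(i)/\tilde{w}_{r-1}(i)$ is well defined and bounded below by a strictly positive quantity, so the product of the lower bounds is a lower bound for the product of the ratios, and the powers of the shared positive base combine additively in the exponent.
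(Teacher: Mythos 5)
Your proposal is correct and is exactly the paper's proof: the paper disposes of this lemma in one line by applying Lemma \ref{lemma:lemma6_herbster}(a) iteratively over trials $t+1,\dots,t^\prime$, which is precisely your telescoping-product argument. Your additional remarks on positivity of the share weights and the additivity of exponents simply make explicit the details the paper leaves implicit.
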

\begin{lemma}\label{lemma:lemma8_herster}
	For any distinct experts $p$ and $q$, if $\sum_{r=t}^{t^\prime-1} \frac{1}{D_r}\sum_{d=1}^{D_r} \lambda(y_{r,d}, \xi_{r,d}(p)) < 1$ and  $1 \le \sum_{r=t}^{t^\prime} \frac{1}{D_r}\sum_{d=1}^{D_r} \lambda(y_{r,d}, \xi_{r,d}(p)) < 2$ then we may lower bound the share weight of expert $q$ on trial $t^\prime$ by
	\begin{equation*}
		\tilde{w}_{t^\prime}(q) 
		\ge \tilde{w}_{t-1}(p) \left[ \frac{\alpha}{N-1} e^{-\eta} (1-\alpha)\right] \left[e^{-\eta} (1-\alpha) \right]^{\sum_{r=t}^{t^\prime} \frac{1}{D_r} \sum_{d=1}^{D_r} \lambda(y_{r,d}, \xi_{r,d}(q))}.
	\end{equation*}
	\begin{proof}
		Let $a_s$ be the weight transferred from expert $p$ to expert $q$ at trial $s$, $t \le s \le t^\prime$, and the total weight transferred through trials $t,\dots, t^\prime$ is $A = \sum_{s=t}^{t^\prime} a_s$. By Lemma \ref{lemma:lemma7_herbster} we have that
		\begin{equation*}
			\tilde{w}_{t^\prime}(q) \ge \sum_{s=t}^{t^\prime} a_s \left[e^{-\eta} (1-\alpha) \right]^{\sum_{r=s}^{t^\prime} \frac{1}{D_r} \sum_{d=1}^{D_r} \lambda(y_{r,d}, \xi_{r,d}(q))}.
		\end{equation*}
		Note, that 
		\begin{equation*}
			\sum_{r=s}^{t^\prime} \frac{1}{D_r}\sum_{d=1}^{D_r} \lambda(y_{r,d}, \xi_{r,d}(q)) \le \sum_{r=t}^{t^\prime} \frac{1}{D_r} \sum_{d=1}^{D_r} \lambda(y_{r,d}, \xi_{r,d}(q)), ~ s \ge t.
		\end{equation*}
		Then we can bound the share weight of expert $q$ at trial $t^\prime$ as
		\begin{equation} \label{eq:weight_q}
			\tilde{w}_{t^\prime}(q) \ge
			A \left[e^{-\eta} (1-\alpha) \right]^{\sum_{r=t}^{t^\prime} \frac{1}{D_r} \sum_{d=1}^{D_r} \lambda(y_{r,d}, \xi_{r,d}(q))}.
		\end{equation}
		Denote $l_s = \frac{1}{D_s} \sum_{d=1}^{D_s} \lambda(y_{s, d}, \xi_{s, d}(p))$ be the average loss of expert $p$ at trial $s$. From the assumptions of this lemma, we have that 
		$\sum_{s=t}^{t^\prime-1} l_s < 1$ and $1 \le \sum_{s=t}^{t^\prime} l_s < 2$.
		
		Applying Lemma \ref{lemma:lemma7_herbster} on trials $t,\dots, s-1$ to expert $p$ and Lemma  \ref{lemma:lemma6_herbster}(b) on trial $s$ we have
		\begin{equation*}
			a_s \ge \tilde{w}_{t-1}(p) \frac{\alpha}{N-1} l_s e^{-\eta \sum_{r=t}^s l_r} (1-\alpha)^{\sum_{r=t}^{s-1} l_r}.
		\end{equation*}
		Then the total weight is
		\begin{multline*}
			A = \sum_{s=t}^{t^\prime} a_s \ge \tilde{w}_{t-1}(p) \frac{\alpha}{N-1} \sum_{s=t}^{t^\prime} \left(l_s e^{-\eta \sum_{r=t}^s l_r} (1-\alpha)^{\sum_{r=t}^{s-1} l_r} \right)\\
			= \tilde{w}_{t-1}(p) \frac{\alpha}{N-1} \sum_{s=t}^{t^\prime-1} \left(l_s e^{-\eta \sum_{r=t}^s l_r} (1-\alpha)^{\sum_{r=t}^{s-1} l_r} \right) 
			+ \tilde{w}_{t-1}(p) \frac{\alpha}{N-1} l_{t^\prime} e^{-\eta \sum_{r=t}^{t^\prime} l_r} (1-\alpha)^{\sum_{r=t}^{t^\prime-1} l_r} \\
			\ge \tilde{w}_{t-1}(p) \frac{\alpha}{N-1} \sum_{s=t}^{t^\prime-1} l_s e^{-\eta \sum_{s=t}^{t^\prime-1} l_s} (1-\alpha)
			+ \tilde{w}_{t-1}(p) \frac{\alpha}{N-1} l_{t^\prime} e^{-\eta \sum_{s=t}^{t^\prime} l_s} (1-\alpha) \\
			= \tilde{w}_{t-1}(p) \frac{\alpha(1-\alpha)}{N-1} \left[\sum_{s=t}^{t^\prime-1} l_s e^{-\eta \sum_{s=t}^{t^\prime-1} l_s} + l_{t^\prime} e^{-\eta \sum_{s=t}^{t^\prime} l_s} \right].
		\end{multline*}
		The last inequality follows from inequalities
		\begin{equation*}
			e^{-\eta\sum_{r=t}^s l_r} \ge e^{-\eta\sum_{r=t}^{t^\prime -1} l_r},~ s \le t^\prime-1 \hspace{1cm}\text{and}
		\end{equation*}
		\begin{equation*}
			(1-\alpha)^{\sum_{r=t}^{s-1} l_r} \ge (1-\alpha)^{\sum_{r=t}^{t^\prime-2} l_r}
			\ge (1-\alpha)^{\sum_{r=t}^{t^\prime-1} l_r} \ge 1-\alpha,~ s \le t^\prime-1,
		\end{equation*}
		because $\sum_{r=t}^{t^\prime-1} l_r < 1$ by the assumptions of this lemma.
		By substituting $b = e^{-\eta}$, $c = \sum_{s=t}^{t^\prime-1} l_s < 1$ and $d = l_{t^\prime}$ and applying Lemma \ref{lemma:lemma5_herbster} we have
		\begin{multline*}
			A \ge \tilde{w}_{t-1}(p) \frac{\alpha(1-\alpha)}{N-1} \left[c b^c + d b^{c+d} \right] = \tilde{w}_{t-1}(p) \frac{\alpha(1-\alpha)}{N-1} b^c (c + db^d)\\
			\ge \tilde{w}_{t-1}(p) \frac{\alpha(1-\alpha)}{N-1} b = \tilde{w}_{t-1}(p) \frac{\alpha}{N-1} e^{-\eta} (1-\alpha).
		\end{multline*}
		Putting this inequality in (\ref{eq:weight_q}) completes the proof.
	\end{proof}
\end{lemma}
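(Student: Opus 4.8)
The plan is to track the flow of weight that Variable-share routes from the source expert $p$ into the target expert $q$ over the trials $t, t+1, \dots, t'$, and to argue that the total mass transferred survives the decay caused by $q$'s own losses. First I would introduce $a_s$, the amount of weight transferred from $p$ to $q$ at trial $s$, and set $A = \sum_{s=t}^{t'} a_s$. Once a parcel $a_s$ is deposited in $q$'s share weight at the end of trial $s$, it only shrinks by the factor controlled by Lemma \ref{lemma:lemma7_herbster}, so writing $l^{(q)}_r$ for the average loss of $q$ on trial $r$ I get $\tilde w_{t'}(q) \ge \sum_{s=t}^{t'} a_s\,[e^{-\eta}(1-\alpha)]^{\sum_{r=s}^{t'} l^{(q)}_r}$. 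Since every exponent there is at most $\sum_{r=t}^{t'} l^{(q)}_r$ and the base lies in $(0,1)$, I can pull the decay out uniformly to obtain $\tilde w_{t'}(q) \ge A\,[e^{-\eta}(1-\alpha)]^{\sum_{r=t}^{t'} l^{(q)}_r}$, reducing the whole problem to a lower bound on $A$.

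Next I would bound each $a_s$ from below. Writing $l_s$ for the average loss of the \emph{source} expert $p$ on trial $s$, part (b) of Lemma \ref{lemma:lemma6_herbster} gives $a_s \ge \tilde w_{s-1}(p)\, e^{-\eta l_s}\,\frac{\alpha}{N-1}\,l_s$, and Lemma \ref{lemma:lemma7_herbster} applied to $p$ on trials $t,\dots,s-1$ replaces $\tilde w_{s-1}(p)$ by $\tilde w_{t-1}(p)\,[e^{-\eta}(1-\alpha)]^{\sum_{r=t}^{s-1} l_r}$. Collecting exponentials yields $a_s \ge \tilde w_{t-1}(p)\,\frac{\alpha}{N-1}\,l_s\, e^{-\eta \sum_{r=t}^{s} l_r}(1-\alpha)^{\sum_{r=t}^{s-1} l_r}$, and summing over $s$ produces a single expression for $A$.

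The delicate step, and the reason for the two-sided hypotheses on $p$'s cumulative loss, is how this sum is simplified. I would separate the terminal trial $s=t'$ from the block $s=t,\dots,t'-1$. On the block the hypothesis $\sum_{r=t}^{t'-1} l_r < 1$ lets me replace each partial exponent by its extreme value, using $e^{-\eta\sum_{r=t}^s l_r}\ge e^{-\eta\sum_{r=t}^{t'-1} l_r}$ and $(1-\alpha)^{\sum_{r=t}^{s-1} l_r}\ge 1-\alpha$, so the block collapses to $\bigl(\sum_{s=t}^{t'-1} l_s\bigr)e^{-\eta\sum_{r=t}^{t'-1} l_r}(1-\alpha)$, while the terminal term retains its own exponent. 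Setting $b=e^{-\eta}$, $c=\sum_{r=t}^{t'-1} l_r$ and $d=l_{t'}$, the bracketed quantity is exactly $c\,b^{c}+d\,b^{c+d}=b^{c}(c+d\,b^{d})$, and the hypotheses deliver precisely the premises of Lemma \ref{lemma:lemma5_herbster}: $c\in[0,1]$ from $\sum_{r=t}^{t'-1} l_r<1$, the bound $d=l_{t'}\in(0,1]$ from the $[0,1]$ loss range together with $d\ge 1-c>0$, and $c+d=\sum_{r=t}^{t'} l_r\ge 1$. That lemma collapses the bracket to $\ge b=e^{-\eta}$, giving $A\ge \tilde w_{t-1}(p)\,\frac{\alpha}{N-1}\,e^{-\eta}(1-\alpha)$, which I substitute into the first-paragraph bound to finish. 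I expect the main obstacle to be resisting a crude termwise estimate: the terminal trial must be kept apart so that the sum matches the form $b^{c}(c+d\,b^{d})$ of Lemma \ref{lemma:lemma5_herbster} rather than being weakened by a uniform bound, and it is this tailored partition that makes the hypotheses $\sum_{r=t}^{t'-1} l_r<1 \le \sum_{r=t}^{t'} l_r$ do exactly the required work.
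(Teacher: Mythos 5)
Your proposal is correct and follows essentially the same route as the paper's proof: the same decomposition of the transferred mass $A=\sum_{s=t}^{t'}a_s$, the same use of Lemma \ref{lemma:lemma7_herbster} to factor out $q$'s decay and to propagate $p$'s weight, the same separation of the terminal trial $s=t'$ so the sum matches the form $b^c(c+db^d)$ of Lemma \ref{lemma:lemma5_herbster}. If anything, you are slightly more careful than the paper in verifying the premises of Lemma \ref{lemma:lemma5_herbster} (noting $d\ge 1-c>0$ from the two-sided hypothesis), which is a welcome touch but not a different argument.
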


\begin{lemma} \label{lemma:collapsed}
	For any superexpert $\mathcal{S}_{T, N, k, \bf{t}, \bf{e}}$ there exists a collapsed superexpert $\mathcal{S}_{T, N, k^\prime, \bf{t^\prime}, \bf{e^\prime}}$, such that the total average loss at any segment (except the initial) of the expert associated with the previous segment is at least one, and the cumulative average loss of this collapsed superexpert exceeds the cumulative average loss of the original superexpert by no more than $k-k^\prime$, i.e.
	\begin{equation} \label{eq:collapsed1}
		\sum_{r = t_i^\prime}^{t^\prime_{i+1}-1} \frac{1}{D_r} \sum_{d=1}^{D_r} \lambda(y_{r, d}, \xi_{r, d}(e^\prime_{i-1}) \ge 1, ~\forall i: 1 \le i \le k^\prime \text{\hspace{0.5cm} and} 
	\end{equation}
	\begin{equation} \label{eq:collapsed2}
		L_T^{\textrm{average}}(\mathcal{S}_{T, N, k^\prime, \bf{t^\prime}, \bf{e^\prime}}) \le 
		L_T^{\textrm{average}}(\mathcal{S}_{T, N, k, \bf{t}, \bf{e}}) + k - k^\prime.
	\end{equation}
	\begin{proof}
		The Lemma is the analogue of Lemma 9 in \cite{warmuth_tracking_original} for the cumulative average losses.  If at interval $[t_i, t_{i+1})$ the total average loss of expert $e_{i-1}$ associated with interval $[t_{i-1}, t_i)$ is less than one, then these two intervals are merged to interval $[t_{i-1}, t_{i+1})$,  and this segment is associated with expert $e_{i-1}$. We continue to do this until (\ref{eq:collapsed1}) holds. Note, that if we merge segments $i-1$ and $i$ then the total loss of expert $e_{i-1}$ on segment $i$ is less than one. Therefore, the total average loss of the new collapsed superexpert is increased by no more than one. As the total number of these collapses is $k-k^\prime$, the total average loss of the collapsed superexpert exceeds the total average loss of the original superexpert by no more than $k - k^\prime$.
	\end{proof}
\end{lemma}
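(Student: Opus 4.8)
The plan is to prove existence constructively, by a greedy merging procedure on the segments of $\mathcal{S}_{T, N, k, \bf{t}, \bf{e}}$, in the spirit of the analogous Lemma~9 of \cite{warmuth_tracking_original} but with every quantity read off as a cumulative \emph{average} loss. First I would scan the segments $[t_i, t_{i+1})$ for $i \ge 1$ and test predicate (\ref{eq:collapsed1}): whether the expert $e_{i-1}$ associated with the preceding segment would incur total average loss at least one on the current segment. If some segment violates this (its value is strictly below one), I merge it into its predecessor — delete the switch point $t_i$, enlarge the preceding interval to $[t_{i-1}, t_{i+1})$, and relabel the enlarged interval with $e_{i-1}$. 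I repeat until no violation remains; the resulting object is the claimed $\mathcal{S}_{T, N, k^\prime, \bf{t^\prime}, \bf{e^\prime}}$.

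Termination and (\ref{eq:collapsed1}) are then immediate: each merge deletes at least one switch point, so the switch count is a non-negative integer that strictly decreases, and the procedure halts after at most $k$ operations; halting means exactly that no segment violates the predicate, which is (\ref{eq:collapsed1}). The loss bound (\ref{eq:collapsed2}) is where the accounting must be done with care. The effect of a single merge on the cumulative average loss is to replace, on the absorbed segment, the contribution of expert $e_i$ by that of $e_{i-1}$; since the loss of $e_{i-1}$ there is $< 1$ by the violation hypothesis and the loss of $e_i$ there is non-negative, the total average loss rises by strictly less than one per merge operation. As each operation removes at least one switch and there are $k - k^\prime$ switches to remove in total, the number of operations is at most $k - k^\prime$, so the total increase is $< k - k^\prime$, giving (\ref{eq:collapsed2}).

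The one genuinely subtle point, and the step I expect to require the most care, is maintaining a \emph{valid} superexpert — one with $e^\prime_j \ne e^\prime_{j+1}$ — throughout. Merging segment $i$ relabels its portion with $e_{i-1}$, which may now coincide with the label $e_{i+1}$ of the following segment, rendering the switch at $t_{i+1}$ spurious. I would handle this by cascading: whenever a merge produces two equal adjacent labels I collapse them too. Such a cascade costs nothing in loss, since the following segment already carried the loss of that very expert and its contribution is unchanged, while it deletes one further switch; hence the per-operation bound ``loss increase $< 1$, switches removed $\ge 1$'' is preserved and the count above is unaffected. I would stress that no monotonicity of the satisfaction predicate across operations is needed — a merge can change which expert precedes the following segment and so flip its status — because the strictly decreasing, non-negative switch count already forces the scan to halt, and it can only halt in a configuration where (\ref{eq:collapsed1}) holds everywhere.
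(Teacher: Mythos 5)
Your proof is correct and follows essentially the same greedy-merging argument as the paper: repeatedly merge any segment on which the preceding segment's expert has total average loss below one, note that each such merge increases the cumulative average loss by less than one while removing at least one switch, and conclude that the total increase over the $k-k^\prime$ removed switches is at most $k-k^\prime$. Your cascading step to preserve the constraint $e^\prime_j \ne e^\prime_{j+1}$ is a refinement the paper's proof passes over in silence (and it genuinely matters, since the later application of Lemma~\ref{lemma:lemma8_herster} in the proof of Theorem~\ref{theorem:bound_vs} requires adjacent experts to be distinct), so it tightens rather than departs from the paper's argument.
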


{\bf Proof of Theorem \ref{theorem:bound_vs}}
\begin{proof}
	We consider an arbitrary superexpert $\mathcal{S}_{T, N, k, \bf{t}, \bf{e}}$.  If for this superexpert property (\ref{eq:collapsed1}) does not hold,  then by Lemma \ref{lemma:collapsed} we can replace it with a collapsed superexpert $\mathcal{S}_{T, N, k^\prime, \bf{t^\prime}, \bf{e^\prime}}$,  for which is holds. Now as property (\ref{eq:collapsed1}) holds there exists step $q_i$ in $i$th segment ($1 \le q_i \le k^\prime$), such that $\sum_{r=t_i^\prime-1}^{q_i^\prime-1} \frac{1}{D_r}\sum_{d=1}^{D_r} \lambda(y_{r,d}, \xi_{r,d}(e^\prime_{i-1})) < 1$ and  $1 \le \sum_{r=t_i^\prime-1}^{q_i^\prime} \frac{1}{D_r}\sum_{d=1}^{D_r} \lambda(y_{r,d}, \xi_{r,d}(e^\prime_{i-1})) < 2$. On trial $T$ we can express the share weight of a collapsed superexpert $\mathcal{S}_{T, N, k,^\prime \bf{t^\prime}, \bf{e^\prime}}$ as the following telescoping product
	\begin{equation*}
		\tilde{w}_T(e^\prime_{k^\prime}) = \tilde{w}_{t^\prime_0-1}(e^\prime_0) \frac{\tilde{w}_{t^\prime_1-1}(e^\prime_0)}{\tilde{w}_{t^\prime_0-1}(e^\prime_0)} \prod_{i=1}^{k^\prime} \left( \frac{\tilde{w}_{q_i}(e^\prime_i)}{\tilde{w}_{t^\prime_i-1}(e^\prime_{i-1})} \frac{\tilde{w}_{t^\prime_{i+1}-1}(e^\prime_i)}{\tilde{w}_{q_i}(e^\prime_i)} \right).
	\end{equation*}
	By applying Lemma \ref{lemma:lemma7_herbster} and Lemma \ref{lemma:lemma8_herster} we get 
	\begin{multline} \label{eq:vs_last}
		\tilde{w}_T(e^\prime_{k^\prime}) \ge \tilde{w}_{t_0^\prime - 1}(e_0^\prime) \left[ e^{-\eta}(1-\alpha) \right]^{\sum_{r=t_0^\prime}^{t_1^\prime-1} \frac{1}{D_r} \lambda(y_{r,d}, \xi_{r, d}(e_0^\prime))} \\
		\cdot \prod_{i=1}^{k^\prime} \left( \left[ \frac{\alpha}{N-1} e^{-\eta} (1-\alpha)\right] \left[e^{-\eta} (1-\alpha) \right]^{\sum_{r=t_i^\prime}^{t_{i+1}^\prime -1} \frac{1}{D_r} \sum_{d=1}^{D_r} \lambda(y_{r,d}, \xi_{r,d}(e_i^\prime))} \right) \\
		= \tilde{w}_{t_0^\prime - 1}(e_0^\prime) \left[e^{-\eta} (1-\alpha) \right]^{ \sum_{r=t_0^\prime}^{t_{k^\prime+1}^\prime -1} \frac{1}{D_r}\sum_{d=1}^{D_r} \lambda(y_{r,d}, \xi_{r,d}(e_i^\prime)) + k^\prime} 
		\left(\frac{\alpha}{N-1} \right)^{k^\prime} \\
		= \tilde{w}_{t_0^\prime - 1}(e_0^\prime) \left[e^{-\eta} (1-\alpha) \right]^{L_T^{\textrm{average}}(\mathcal{S}_{T, N, k^\prime, \bf{t^\prime}, \bf{e^\prime}}) + k^\prime} \left(\frac{\alpha}{N-1} \right)^{k^\prime} \\
		\ge \frac{1}{N}\left[e^{-\eta} (1-\alpha) \right]^{L_T^{\textrm{average}}(\mathcal{S}_{T, N, k, \bf{t}, \bf{e}}) + k} \left(\frac{\alpha}{N-1} \right)^{k}.
	\end{multline}
	The last inequality follows from Lemma \ref{lemma:collapsed}.  
	
	Because Variable-share and Fixed-share use the same inequality to find their predictions (line~6 of Protocol~\ref{aap_incremental}) we can show similar to the proof of Theorem \ref{theorem:bound_fs} that (\ref{AA_prop_delays}) holds 
	\begin{equation*}
		L_T^{\mathrm{average}} \le -\frac{C}{\eta} \ln \tilde{w}_T(i).
	\end{equation*}
	The proof is completed by substituting $\tilde{w}_T(i)$ with $\tilde{w}_T(e^\prime_{k^\prime}) $ from (\ref{eq:vs_last}).
\end{proof}

{\bf Proof of Corollary \ref{cor:bound_fs}}
\begin{proof}
	We follow the proof of Corollary 2 in \cite{warmuth_tracking_original}. We re-write (\ref{eq:bound_fs}) as:
	\begin{equation} \label{eq:cor_fs2}
		L_T^{\textrm{average}} \le C L_T^{\textrm{average}} (\mathcal{S}_{T, N, k, \bf{t}, \bf{e}}) + \frac{C}{\eta} \left( \ln N + f(\alpha)   \right),
	\end{equation}
	where
	\begin{equation*}
		f(\alpha) = k \ln \frac{N-1}{\alpha} + (T-1-k) \ln \frac{1}{1-\alpha}.
	\end{equation*}
	By taking the partial derivative with respect to $\alpha$ and setting it to zero we have:
	\begin{equation*}
		\frac{\partial f(\alpha)}{\partial \alpha} = -\frac{k}{\alpha} + \frac{T-1-k}{1-\alpha} = 0.
	\end{equation*}
	We have that $\alpha = \frac{k}{T-1}$ achieves zero of the first partial derivative. The second partial derivative with respect to $\alpha$ is greater than zero and therefore it minimises the regret of bound (\ref{eq:bound_fs}):
	\begin{equation*}
		\frac{\partial^2 f(\alpha)}{\partial \alpha^2} = \frac{k}{\alpha^2} + \frac{T-1-k}{(1-\alpha)^2} \ge 0.
	\end{equation*}
	Assume that the total number of steps is upper bounded by $T \le \hat{T}$ and the number of switches is lower bounded by $k \ge \hat{k}$, and by putting $\hat{\alpha} = \frac{\hat{k}}{\hat{T}-1}$ in (\ref{eq:cor_fs2}) we have:
	\begin{multline} \label{eq:cor_ineq1}
		L_T^{\textrm{average}} \le CL_T^{\textrm{average}} (\mathcal{S}_{T, N, k, \bf{t}, \bf{e}}) 
		+ \frac{C}{\eta} \left( \ln N + k \ln (N-1) + k \ln \frac{\hat{T}-1}{\hat{k}} + (T - 1 - k) \ln \Big(1 + \frac{\hat{k}}{\hat{T}-1-\hat{k}} \Big) \right).
	\end{multline}
	We bound the last term using the inequality $\ln (1+x) \le x$:
	\begin{equation} \label{eq:cor_ineq2}
		(T - 1 - k) \ln \Big(1 + \frac{\hat{k}}{\hat{T}-1-\hat{k}} \Big) \le \hat{k} \frac{T - 1 - k}{\hat{T}-1-\hat{k}} \le \hat{k}.
	\end{equation}
	The last inequality comes the bounds on the number of steps and the number of switches $T \le \hat{T}$ and $k \ge \hat{k}$, and therefore $T-k-1 \le \hat{T}-\hat{k}-1$.
	Putting (\ref{eq:cor_ineq2}) in (\ref{eq:cor_ineq1}) completes the proof.
\end{proof}

{\bf Proof of Corollary \ref{cor:bound_vs}}
\begin{proof}
	The proof is following the one of Corollary 3 in \cite{warmuth_tracking_original}. We re-write (\ref{eq:bound_vs}) as
	\begin{equation} \label{eq:bound_vs2}
		L_T^{\textrm{average}} \le CL_T^{\textrm{average}} (\mathcal{S}_{T, N, k, \bf{t}, \bf{e}}) 
		+ \frac{C}{\eta} \left[ \ln N + k(\ln(N-1) + \eta) + f(\alpha) \right],
	\end{equation}
	where
	\begin{equation} \label{eq:f_alpha_vs}
		f(\alpha) = \ln \frac{1}{1-\alpha} L_T^{\textrm{average}} (\mathcal{S}_{T, N, k, \bf{t}, \bf{e}}) + k \ln \frac{1}{\alpha(1-\alpha)}.
	\end{equation}
	By taking the partial derivative with respect to $\alpha$ and setting it to zero we have:
	\begin{equation*}
		\frac{\partial f(\alpha)}{\partial \alpha} = \frac{1}{1-\alpha} L_T^{\textrm{average}} (\mathcal{S}_{T, N, k, \bf{t}, \bf{e}}) + k \frac{2\alpha-1}{\alpha(1-\alpha)} = 0.
	\end{equation*}
	Assume that the total number of steps is upper bounded by $T \le \hat{T}$ and the number of switches is lower bounded by $k \ge \hat{k}$ , then $\hat{\alpha} = \frac{\hat{k}}{2\hat{k} +\hat{L} }$ achieves zero of the first partial derivative. The second partial derivative with respect to $\alpha$ is greater than zero and therefore it minimises the regret of bound (\ref{eq:bound_vs}):
	\begin{equation*}
		\frac{\partial^2 f(\alpha)}{\partial \alpha^2} = \frac{1}{(1-\alpha)^2} L_T^{\textrm{average}} (\mathcal{S}_{T, N, k, \bf{t}, \bf{e}}) + k \frac{\alpha^2 + (1-\alpha)^2}{\alpha^2 (1-\alpha)^2} \ge 0.
	\end{equation*}
	By putting $\hat{\alpha}$ in (\ref{eq:f_alpha_vs}) we have:
	\begin{equation} \label{eq:f_alpha_vs2}
		f(\hat{\alpha}) = \ln \left(1 + \frac{\hat{k}}{\hat{L} + \hat{k}} \right) L_T^{\textrm{average}} (\mathcal{S}_{T, N, k, \bf{t}, \bf{e}}) + k \ln \frac{(2\hat{k} + \hat{L})^2}{(\hat{k}+\hat{L}) \hat{k}} 
		= \frac{\hat{k} \hat{L}}{\hat{L} + \hat{k}} + k \ln \frac{(2\hat{k} + \hat{L})^2}{(\hat{k}+\hat{L}) \hat{k}} .
	\end{equation}
	The inequality is achieved by applying $L_T^{\textrm{average}} (\mathcal{S}_{T, N, k, \bf{t}, \bf{e}}) \le \hat{L}$ and $\ln(1+x) \le x$.
	
	If $\hat{k} \ge \hat{L}$, the first term is at most $\frac{1}{2} \hat{k}$ and the second term is upper bounded by $k \ln \frac{9}{2}$. Therefore, for $\hat{k} \ge \hat{L}$, we have that 
	\begin{equation*}
		f(\hat{\alpha}) \le \frac{1}{2} \hat{k} + k \ln \frac{9}{2}.
	\end{equation*}
	Putting this inequality in (\ref{eq:bound_vs2}) gives (\ref{eq:cor_vs2}).
	
	We can re-write ( \ref{eq:f_alpha_vs2}) as:
	\begin{equation*}
		f(\hat{\alpha}) = \frac{\hat{k} \hat{L}}{\hat{L} + \hat{k}} + k \ln \frac{(2\hat{k} + \hat{L})^2}{(\hat{k}+\hat{L}) \hat{L}} + k \ln \left( \frac {\hat{L}}{\hat{k}} \right) .
	\end{equation*}
	If $\hat{k} \le \hat{L}$, the first term is at most $\hat{k}$ and the second term is upper bounded by $k \ln \frac{9}{2}$. Therefore, for $\hat{k} \le \hat{L}$, we have that 
	\begin{equation*}
		f(\hat{\alpha}) \le \hat{k} + k \ln \frac{9}{2} + k \ln \left( \frac {\hat{L}}{\hat{k}} \right).
	\end{equation*}
	Putting this inequality in (\ref{eq:bound_vs2}) gives (\ref{eq:cor_vs1}).
\end{proof}

\end{document}